\documentclass[letterpaper]{article} %DO NOT CHANGE THIS
\usepackage{aaai18}  %Required
\usepackage{times}  %Required
\usepackage{helvet}  %Required
\usepackage{courier}  %Required
\usepackage{url}  %Required
\usepackage{graphicx}  %Required

% Use the postscript times font!
\usepackage{times}
\usepackage{xcolor}
\usepackage{soul}
\usepackage[utf8]{inputenc}
\usepackage[small]{caption}
\usepackage{enumitem}
\usepackage{latexsym} 
\usepackage{changebar}

\usepackage{amsmath,amssymb,amsthm} 
\usepackage{url}
\usepackage{tabularx, booktabs, caption, makecell, ragged2e}

\frenchspacing  %Required
\setlength{\pdfpagewidth}{8.5in}  %Required
\setlength{\pdfpageheight}{11in}  %Required

\usepackage {tikz}
\usetikzlibrary {positioning}
\usepackage{subfig}
\newcolumntype{Y}{>{\RaggedRight\arraybackslash}X}
\usepackage{listings}% http://ctan.org/pkg/listings
\lstset{
  basicstyle=\ttfamily,
  mathescape
}

\title{%\vspace*{-0.25in}
Hybrid Temporal Situation Calculus}
%
% A few alternative titles: which one is the best ?
% A Logical Semantics for First Order Extension of Hybrid Systems
% A First Order Logical Theory of Hybrid Systems
% A Sound and Complete Theory for Hybrid Systems
% A Sound and Complete Axiomatization for Hybrid Systems

% Single author syntax
%\author{Paper Registration \#2026}
% Multiple author syntax (remove the single-author syntax above and the \iffalse ... \fi here)
\iftrue
\author{
Vitaliy Batusov$^1$, 
Giuseppe De Giacomo$^2$, 
Mikhail Soutchanski$^3$, 
\\ 
$^1$ York University, Toronto, ON, Canada  \\
$^2$ Sapienza Universit\`a di Roma, Italy \\
$^3$ Ryerson University, Toronto, ON, Canada  \\
vbatusov@cse.yorku.ca,
degiacomo@dis.uniroma1.it,
mes@scs.ryerson.ca
}
% If your authors do not fit in the default space, you can increase it 
% by uncommenting the following (adjust the "2.5in" size to make it fit
% properly)
\fi
%\setlength\titlebox{1.25in}

%PDF Info Is Required:
%  \pdfinfo{
%/Title (A Logical Semantics for Hybrid Systems)
%/Author (Vitaliy Batusov, Giuseppe De Giacomo, Mikhail Soutchanski)}
\setcounter{secnumdepth}{1} 
\newcommand{\commentout}[1]{}

\newcommand{\liff}{\leftrightarrow}

\newcommand{\eq}{\!=\!}
\newcommand{\nneq}{\!\neq\!}
\newcommand{\ggeq}{\!\geq\!}
\newcommand{\lleq}{\!\leq\!}

\newcommand{\init}{_\text{\emph{init}}}

\newtheorem{theorem}{Theorem}
\newtheorem{lemma}{Lemma}
\theoremstyle{definition}
\newtheorem*{example}{Example}

\nochangebars
\begin{document}
\nocopyright 
\maketitle

%\vspace*{-0.25in}

\begin{abstract}
The ability to model continuous change in Reiter's temporal situation calculus action theories has attracted a lot of interest. 
In this paper, we propose a new development of his approach, which is directly inspired by hybrid systems in control theory. Specifically, while keeping the foundations of Reiter's axiomatization, we propose an elegant extension of his approach by adding a time argument to all fluents that represent continuous change. Thereby, we insure that change can happen not only because of actions, but also due to the passage of time. We present a systematic methodology to derive, from simple premises, a new group of axioms which specify how continuous fluents change over time within a situation. We study regression for our new temporal basic action theories and demonstrate what reasoning problems can be solved.
Finally, we formally show \cbstart that \cbend our temporal basic action theories indeed capture hybrid automata.
\end{abstract}

\section{Introduction}

Adding time and continuous change to Situation Calculus action theories
has attracted a lot of interest over the years.  A seminal book
\cite{reiter}, refining the ideas of \cite{PintoPhD,pinto1995},
extends situation calculus (SC) with continuous time.
% in a way which is fully compatible with the previously established results.
% such as relative satisfiability of action theories and 
% the properties of regression. 
For each continuous process, there is an action that initiates the process
at a moment of time, and there is an instantaneous action that terminates it.
% His seminal work provides foundation for designing
% the control agents that deal with discrete components and supports reasoning
% about exogenous natural actions that must occur at the right moments of time.
A basic tenet of Reiter's temporal SC is that all changes in the world, 
including continuous processes such as a vehicle driving in a city or water flowing down a pipe, are 
the result of named \cbstart discrete \cbend actions. Consequently, in his temporal extension of SC, 
fluents remain atemporal, while each instantaneous action acquires a time argument.
As a side effect of this design choice, continuously varying quantities 
do not attain values until the occurrence of a time-stamped action. 
\commentout{    % Old version
For example, in Newtonian physics, suppose a player kicks a football,
sending it on a ballistic trajectory. The question might be: given the
vector of initial velocity, when will the ball reach the peak of its
trajectory? or: when will the ball be within 10\% of said peak?  In order to answer these questions, natural (exogenous) actions are introduced to be able to deem the moment of interest for the query. \cbstart In order to \cbend  directly answer such questions, one needs the ability to formulate
queries about the height of the ball at arbitrary time-points, which
is impossible without an explicit time variable in the query. 
}       %  end of old version 
\cbstart
For example, in Newtonian physics, suppose a player kicks a football,
sending it on a ballistic trajectory. The questions might be, e.g., given
the vector of initial velocity, when will the ball reach the peak of its
trajectory? Or, when will the ball be within 10\% of said peak?
In order to answer these questions either a natural, or an exogenous
(respectively) action, depending on a query, has to be executed to deem 
the moment of interest for the query. In other words,
before one can answer such questions, one needs the ability
to formulate queries about the height of the ball at arbitrary time-points,
which is not directly possible without an explicit action with a time argument,
if a query is formed from atemporal fluents.  
%Since all instantaneous actions project on a
%single time line, the situations resulting from execution of the actions
%also align on the time line, and consequently the queries can be formulated 
%and answered.
\cbend
In Reiter's temporal SC, to query about the values of physical quantities
in between the actions (agent's or natural), one could opt for an
auxiliary exogenous action $watch(t)$ \cite{mes99}, whose purpose is
to fix a time-point $t$ to a situation when it occurs, and then pose
an atemporal query in the situation which results from executing
$watch(t)$.  Similarly, one can introduce an auxiliary exogenous
action $waitFor(\phi)$ that is executed at a moment of time when the
condition $\phi$ becomes true,
where $\phi$ is composed from functional fluents that are interpreted as 
continuous functions of time. 
% It is assumed that a suitable moment of time 
% can be calculated numerically or using an external constraints solver.
%%
This approach has proved to be quite successful in  
cognitive robotics \cite{GrosskreutzLakemeyer2003}. For example, it has been used to provide a SC semantics \cbstart for \cbend continuous time variants of the popular planning language PDDL \cite{ClassenHuLakemeyer2007}.
% for a popular planning language  \cite{ClassenHuLakemeyer2007}.
% \cite{GrosskreutzLakemeyer2000,GrosskreutzLakemeyer2003}.
% and it is used to provide a SC semantics
% for a popular planning language  \cite{ClassenHuLakemeyer2007}.
% modeling natural phenomena \cite{Natual Phenomena:Lakemayer??} and 
% reasoning with continuos time in KR. % \cite{??}

\cbstart
In this paper we study  a {\it new} variant of temporal SC in which we can {\it directly} query continuously changing quantities at arbitrary points in time without introducing any actions
(either natural, or exogenous, or auxiliary) that supply the moment of time. Our approach works in a query-independent way. 
\cbend
For doing so we  take inspiration from the work on Hybrid Systems in 
Control Theory \cite{davoren,Nerode2007},
%%
%%GGG
% However, 
% these approaches 
% do not align well with hybrid systems in 
% control theory \cite{davoren,Nerode2007}, although these two different research areas 
% share the same concern of reasoning about time. %continuous change. 
% %In hybrid systems, there are both discrete transitions between states and 
% %continous change within each state. 
%%
%%Hybrid systems are based on discrete transitions between states that continuously ev%%olve over time. 
which are based on discrete transitions between states that continuously evolve over time. 

Following this idea, the crux of our proposal is to add a new kind of axioms called \emph{state evolution axioms} (SEA) to 
Reiter's \emph{successor state axioms} (SSA). The successor state axioms specify, as usual,
how fluents change when actions are executed. Informally, they characterize 
transitions between different states due to actions.
The state evolution axioms specify  how  the flow of time can bring  changes 
in system parameters within a given situation while no actions are executed.
Thus, we maintain the fundamental assumption of SC that all \emph{discrete} change is due to actions, though situations now include a temporal evolution.

%%
%Reiter, in \cite{reiter}, gives several examples of SSA in the temporal SC and
%%
Reiter \cite{reiter1991,reiter} shows how the SSA can be derived in atemporal SC from 
the effect axioms in normal form by making the causal completeness assumption. 
We do similar work wrt state evolution axioms, thus providing 
 a precise methodology for axiomatization of continuous processes in SC in
the spirit of hybrid systems.

\cbstart
One of the key results of SC is the ability to reduce reasoning about a future situation to reasoning about the initial state by means of regression \cite{reiter}. Despite that we now have continuous evolution in a situation, we show that a suitable notion of regression can still be defined. 
\cbend

We finally observe that, in hybrid automata, while continuous change
is dealt with thoroughly, the discrete description is typically
limited to finite state machines, i.e., it is based on a propositional
representation of the state. SC, instead, is based on a relational
representation of the state.
There are practical examples that call for such an extension of hybrid systems where states have an internal relational structure and the continuous flow of time determines the evolution within the state \cite{traffic}.
Our proposal can readily capture these cases, by providing a relational
extension to hybrid automata, which benefits from the representational
richness of SC.
% Moreover, the existing theory of hybrid 
% finite automata applies only to the systems with finitely many atomic states 
% such that within each state system parameters continuously evolve with time.
%There are practical examples of hybrid systems whose states have internal 
%relational structure, and in every parametrized state, there is a continuous 
%flow function of time that determines evolution within state \cite{traffic}.

% There are practical examples that call for an extension of hybrid systems where states have an internal relational structure and the continuous flow of time determines the evolution within the state \cite{traffic}.

% For example, traffic lights 
% in a large city have intersections as parameters, and each street light controls 
% flow of vehicles at one intersection \cite{traffic}.

%%
In summary, wrt this point,
% the aim of this paper is to develop a new extension of SC to
%continuous time that can provide faithful logical semantics to hybrid systems.
our work may serve as 
the spark that will bring together and cross-fertilize KR and Hybrid
Control, getting from the former the semantic richness of relational
states and from the latter a convenient treatment of continuous time.
%%

%We show soundness and completeness of our axiomatization wrt hybrid automata.

%
The next section provides the technical background on SC and hybrid systems. Section \ref{s:main} presents our contributions to the temporal SC, including a derivation of state evolution axioms and a definition of temporal basic action theories. Section~\ref{s:example} illustrates the proposal on a full fledged example.   Section~\ref{s:regression} studies regression.  Section~\ref{s:related-sc} compares with previous proposals in AI and SC in particular.
Section \ref{s:ha} shows formally how hybrid automata can be captured by our temporal SC. Finally, Section \ref{s:conclusion} concludes the paper by discussing future work.

\section{Background}\label{s:bg}

\subsection{Situation Calculus}
Situation calculus (SC) is a second-order (SO) language for representing dynamic worlds. It has three basic sorts (situation, action, object) and a rich alphabet for constructing formulas over terms of these sorts. Reiter (\citeyear{reiter}) shows that to solve many reasoning problems about actions, it is convenient to work with SC \emph{basic action theories} (BATs) whose main ingredients are precondition axioms and successor state axioms. For each action
function $A(\bar{x})$, an \emph{action precondition axiom} (APA) has the syntactic
form
\begin{align*}
Poss(A(\bar{x}),s) \liff \Pi_A(\bar{x},s),
\end{align*}
meaning that the action $A(\bar{x})$ is possible in situation $s$ if and only if $\Pi_A(\bar{x},s)$ holds in $s$, where $\Pi_A(\bar{x},s)$ is a formula with free variables among $\bar{x} \eq (x_1, \ldots, x_n)$ and $s$. Situations are first order (FO) terms which denote possible world histories. A distinguished constant $S_0$ is used to denote the \emph{initial situation}, and function $do(\alpha, \sigma)$ denotes the situation that results from performing action $\alpha$ in situation $\sigma$. Every situation corresponds uniquely to a sequence of actions. We use $do([\alpha_1,\ldots,\alpha_n],S_0)$ to denote complex situation terms obtained by consecutively performing $\alpha_1,\ldots,\alpha_n$ in $S_0$. The notation $\sigma^\prime \sqsubseteq \sigma$ means that either situation $\sigma^\prime$ is a subsequence of situation $\sigma$ or $\sigma \eq \sigma^\prime$.
The formula $\forall a \forall s^\prime(do(a,s^\prime)\sqsubseteq \sigma \to Poss(a,s^\prime))$, abbreviated as $executable(\sigma)$, captures situations $\sigma$ all of whose actions are consecutively possible. Every BAT contains a set $\Sigma$ of domain-independent axioms which characterize situations as a single finitely branching infinite tree starting from $S_0$ such that, at each node $S$, each branch corresponds to a new situation $do(A, S)$ arising from execution of $A$, one of the finitely many actions, at $S$. Objects are FO terms other than actions and situations that depend on the domain of application. Above, $\Pi_A(\bar{x},s)$ is a formula \emph{uniform} in situation argument $s$: it does not mention the predicates $Poss$, $\sqsubseteq$, it does not quantify over variables of sort situation, it does not mention equality on situations, and it has no occurrences of situation terms other than the variable $s$ (see \cite{reiter}). 
For each relational fluent $F(\bar{x},s)$ and each functional fluent $f(\bar{x},s)$, respectively, a \emph{successor state axiom} (SSA) has the form
\begin{align*}
&F(\bar{x},do(a,s)) \liff \Phi_F(\bar{x},a,s),\\
&f(\bar{x},do(a,s))\eq y \liff \phi_f(\bar{x},y,a,s),
\end{align*}
where $\Phi_F(\bar{x},a,s)$ and $\phi_f(\bar{x},y,a,s)$ are formulas uniform in $s$, all of whose free variables are among those explicitly shown. 
(As usual, all free variables are $\forall$-quantified at front.)
In addition to $\Sigma$, the set $\mathcal{D}_{ap}$ of APAs, and the set $\mathcal{D}_{ss}$ of SSAs, a BAT $\mathcal{D}$ contains an \emph{initial theory}: a finite set $\mathcal{D}_{S_0}$ of FO formulas whose only situation term is $S_0$ (and possibly static facts without a situational argument). 
%This set of axioms specifies the values of all fluents in the initial state. It also describes all static facts. 
Finally, BATs
include a set $\mathcal{D}_{una}$ of unique name axioms for actions (UNA) specifying that two actions are different if their names are different and that identical actions have identical arguments. If a BAT has functional fluents, it is required to satisfy a consistency property whereby, for the right-hand side $\phi_f(\bar{x},y,a,s)$ of the SSA of each functional fluent $f$, there must exist a unique $y$ such that $\phi_f(\bar{x},y,a,s)$ is entailed by $\mathcal{D}_{S_0} \cup \mathcal{D}_{una}$.

BATs enjoy the \emph{relative satisfiability} property: a BAT $\mathcal{D}$ is satisfiable whenever $\mathcal{D}_{una} \cup \mathcal{D}_{S_0}$ is (see Theorem 4.4.6 in \cite{reiter}). This property allows one to disregard the more problematic parts of a BAT, like the second-order induction axiom, when checking satisfiability. Moreover, BATs benefit from \emph{regression}, a natural and powerful reasoning mechanism, invaluable for answering queries about the future (a problem known as \emph{projection}). The regression operator $\mathcal{R}$ is defined for sufficiently specific (\emph{regressable}) queries about the future, i.e. formulas without $\sqsubseteq$ or equality on situations where each situation term has the syntactic form $do([\alpha_1,\ldots,\alpha_n],S_0)$ and the action argument of each $Poss$ atom is bound to an action function. $\mathcal{R}[\varphi]$ is obtained from a formula $\varphi$ by recursively replacing each $Poss(A(\bar{t}),s)$ atom by $\mathcal{R}[\Pi_A(\bar{t},s)]$, each relational fluent atom $F(\bar{t},do(\alpha,\sigma))$ by $\mathcal{R}[\Phi(\bar{t},a,s)]$, and, for each functional fluent term $f(\bar{t},do(\alpha,\sigma))$ whose nested terms of sort $\textsc{object}$ or $\textsc{action}$ are uniform in $S_0$, replacing $\mathcal{R}[\varphi]$ by $\mathcal{R}[(\exists y).\phi_f(\bar{t},t,\alpha,\sigma) \land \varphi|^{f(\bar{t},do(\alpha,\sigma))}_y]$, where $A|^B_C$ denotes the substitution of $C$ for $B$ in $A$. A seminal result (Theorem 4.5.5, also p.73 in \cite{reiter}) has it that $\mathcal{D} \models \varphi$ iff $\mathcal{D}_{una} \cup \mathcal{D}_{S_0} \models \mathcal{R}[\varphi]$, meaning that regression reduces SO entailment to FO entailment by compiling dynamic aspects of the theory into the query.

%It turns out that to accommodate time the foundational axioms for the sequential situation calculus can be extended in a minimal way. The situation tree grows from the root in $S_0$. We already have a precedence relation $\sqsubseteq$ between situations. Since we now have also the timeline, we have to relate occurrences of instantaneous actions with points on this timeline and make sure that if $s_1 \sqsubseteq s_2$, then time when $s_1$ starts cannot be later than the time when $s_2$ starts. To accomplish this task, Reiter introduces two new function symbols and extends appropriately the abbreviation for executable situations.

To accommodate time, Reiter adds a temporal argument to all action functions and introduces two special function symbols. The symbol $time: \textsc{action} \mapsto \textsc{time}$ is used to access the time of occurrence of an action via its term and is specified by an axiom $time(A(\bar{x},t)) \eq t$ (included in $\mathcal{D}_{S_0}$) for every action function $A(\bar{x},t)$ in the alphabet of the BAT. The symbol $start : \textsc{situation} \mapsto \textsc{time}$ is used to access the starting time of situation $s$ and is specified by the new foundational axiom $start(do(a,s)) = time(a)$. The starting time of $S_0$ is not enforced, and the time points constituting the timeline with dense linear order are assumed to always have the standard interpretation (along with $+$, $<$, etc.). To outlaw temporal paradoxes, the abbreviation $executable(s)$ is redefined as
\begin{align*}
\forall a, s^\prime. do(a,s^\prime)\!\sqsubseteq\! s \to (Poss(a,s^\prime) \land start(s^\prime)\!\leq\!time(a)).
\end{align*}

Another useful notion is that of \emph{natural actions} --- non-agent actions that occur spontaneously as soon as their precondition is satisfied. Such actions are marked using the predicate symbol $natural$ as a part of $\mathcal{D}_{S_0}$, e.g., $natural(bounce(ball_1, t))$, and their semantics are encoded by a further modification of $executable(s)$. We use natural actions to induce relational change based on the values of the continuous quantities.

\subsection{Hybrid Systems}\label{ss:ha}
Hybrid automata are mathematical models used ubiquitously in control theory for analyzing dynamic systems which exhibit both discrete and continuous dynamics. 
\cite{davoren} define a \emph{basic hybrid automaton} (HA) as a system $H$ consisting of:
\begin{itemize}\setlength\itemsep{0em}
\item[--] a finite set $Q$ of \emph{discrete states};
\item[--] a \emph{transition relation} $E \subseteq Q \times Q$;
\item[--] a \emph{continuous state space} $X \subseteq \mathbb{R}^n$;
\item[--] for each $q\in Q$, a \emph{flow function} $\varphi_q:X \times \mathbb{R} \mapsto X$ and a set $Inv_q \subseteq X$ called the \emph{domain of permitted evolution};
\item[--] for each $(q,q^\prime) \in E$, a \emph{reset relation} $R_{q,q^\prime} \subseteq X \times \mathcal{P}(X)$. % and a \emph{guard region} $Grd_{q,q^\prime} \isdef \texttt{dom}(R_{q,q^\prime}) \subseteq X$. %indicator function
\item[--] a set $Init \subseteq \cup_{q\in Q}(\{q\}\times Inv_q)$ of \emph{initial states}.
\end{itemize}
Like a discrete automaton, a HA has discrete states and a state transition graph, but within each discrete state its continuous state evolves according to a particular flow, e.g., it can be an (implicit) solution to a system of differential equations. % representing some physical process. 
The domain of permitted evolution delineates the boundaries which the continuous state $X$ of the automaton cannot cross while in state $q$, i.e., $\varphi_q(X, t) \in Inv_q$. The reset relation helps to model discontinuous jumps in the value of the continuous state which accompany discrete state switching.

A \emph{trajectory} of a hybrid automaton $H$ is a %finite or infinite 
sequence $\eta = \langle \Delta_i, q_i, \nu_i\rangle_{i\in I}$, with $I=\{1,2,\ldots\}$, such that for each $i \in I$:
\begin{enumerate}[label=(\alph*),align=left,leftmargin=*] \setlength\itemsep{0em}
\item the duration $\Delta_i \in \mathbb{R}^+ \cup \{\infty\}$, with $\Delta_i\eq \infty$ only if $I$ is finite and \ $i\eq |I|$, the number of elements in $I$;
\item $q_i \in Q$;
\item $\nu_i:[0,\Delta_i] \mapsto X$ is a continuous curve along the flow $\varphi_{q_i}$ that lies entirely inside $Inv_{q_i}$;
\item $(q_1,\nu_1(0)) \in Init$;
\item if $i < |I|$, then $(q_i, q_{i+1}) \in E$ and $(\nu_i(\Delta_i),$ $\nu_{i+1}(0))$ $\in$ $R_{q_i,q_i+1}$.
\end{enumerate}
A trajectory captures an instance of a legal evolution of a hybrid automaton over time. Duration $\Delta_i$ is simply the time spent by the automaton in the $i$-th discrete state it reaches while legally travelling through the transition graph, obeying the reset relation. The duration can be infinite if the automaton remains in the final discrete state $q_i$ indefinitely and the evolution of the continuous state within $q_i$, as described by the function $\nu_i$, never leaves the allowed domain $Inv_{q_i}$. A trajectory is \emph{finite} if it contains a finite number $|I|$ of steps and the sum of all durations, $\Sigma_{i\in I}\Delta_i$, is finite.

\section{Hybrid Temporal Situation Calculus}\label{s:main}
In our quest for a hybrid temporal SC, we reuse the temporal machinery introduced into BATs by Reiter, namely: all action symbols have a temporal argument and the functions $time$ and $start$ are axiomatized as described above. We preserve atemporal fluents, but no longer use them to model continuously varying physical quantities. Rather, atemporal fluents serve to specify the context in which continuous processes operate. For example, the fluent $Falling(b,s)$ holds if a ball $b$ is in the process of falling in situation $s$, indicating that, for the duration of $s$, the position of the ball (and its derivatives) should be changing as a function of time according to the equations of free fall. The fluent $Falling(b,s)$ may be directly affected by instantaneous actions $drop(b,t)$ (ball begins to fall at the moment of time $t$) and $catch(b,t)$ (ball stops at $t$), but the effect of these actions on the position of the ball comes about only indirectly, by changing the context of a continuous trajectory and thus switching the continuous trajectory that the ball can follow. More specifically, a falling ball is one context, and a ball at rest is another. %In a general case, there are finitely many pairwise mutually exclusive context types, and each context type in this finite partitioning is characterized by its own continuous function that determines how a physical quantity changes.

In a general case, there are finitely many (parametrized) context types which are pairwise mutually exclusive when their parameters are appropriately fixed, and each context type is characterized by its own continuous function that determines how a physical quantity changes.

To model continuously varying physical quantities, we introduce new functional fluents with a temporal argument. We imagine that these fluents can change with time, 
and not only as a direct effect of instantaneous actions. %By doing so, we cast aside the tenet that all change is due to actions, and allow change to happen due to the passage of time. 
For example, for the context where the ball is falling, the velocity of the ball 
at time $t$ represented by fluent $vel(b,t,s)$ can be specified as
%\begin{align*}
%&[falling(b,s) \land{}\\ % t > start(s) \land{}\\
%&\hphantom{[}y \eq \ vel(b,start(s),s) - g \cdot(t - start(s))] \to{}\\
%&\qquad  vel(b, t, s) \eq y.
%\end{align*}
\begin{align*}
&[Falling(b,s) \land y = vel(b,start(s),s) \!-\! g \!\cdot\!(t \!-\! start(s))]\\\
&\qquad \to  vel(b, t, s) \eq y.
\end{align*}
%\begin{align*}
%&falling(b,s) \to{}\\ % t > start(s) \land{}\\
%&\quad  vel(b, t, s) \eq [vel(b,start(s),s) - g \cdot(t - start(s))].
%\end{align*}
Notice that this effect axiom does not mention actions and describes the evolution of $vel$ within a single situation.
% Careful: the falling ball example in the original draft is flawed: the distance predicate does not evolve well across situations.

%As an another example to illustrate that the value of a temporal fluent can be defined implicitly, consider a point rotating around a circle with the radius $R$. Then, its coordinates can be defined as
%\begin{align*}
%&Rotates(x,y,s)\land (x^2(t) + y^2(t)\!=\! R)\rightarrow \\
%&\quad xCoord(t,s) \!=\! x,\\
%&Rotates(x,y,s)\land (x^2(t) + y^2(t)\!=\! R)\rightarrow \\
%&\quad yCoord(t,s) \!=\! y.
%\end{align*}

Formally, we augment SC with appropriate sorts to represent real-valued time, real-valued physical quantities, and accordingly extend the sets of predicate and function symbols of the language.
%\begin{itemize}
%\item[--] New sorts: $\textsc{time}$ for the timeline, always interpreted as $\mathbb{R}$, and $\textsc{reals}$ for physical quantities, always interpreted as $\cup_{i \geq 1}\mathbb{R}^i$.
%\item[--] For each $n \geq 0$ and $k \in \{0,1\}$, a finite or countably infinite set of predicate symbols of sort $(\textsc{object} \cup \textsc{reals})^n \times \textsc{situation}^k$ and a finite or countably infinite set of function symbols of sort $(\textsc{object} \cup \textsc{reals})^n \times \textsc{situation}^k \mapsto (\textsc{object} \cup \textsc{reals} \cup \textsc{time})$. These symbols simply extend situation-independent predicates and functions and also \emph{atemporal} relational and functional fluents to the new sorts.
%\item[--] For each $n \geq 0$, a finite or countably infinite set of function symbols of sort $(\textsc{object} \cup \textsc{reals})^n \times \textsc{time} \mapsto \textsc{action}$. We call these \emph{temporal action functions}.
%\item[--] For each $n \geq 0$, a finite or countably infinite set of function symbols of sort $(\textsc{object} \cup \textsc{reals})^n \times \textsc{time} \times \textsc{situation} \mapsto \textsc{reals}$. We call these \emph{temporal functional fluents}.
%\end{itemize}
%Like Reiter, we assume the standard interpretation of real numbers and sets, together with their operators and relations.

\subsection{Deriving State Evolution Axioms}

Our starting point is a \emph{temporal change axiom} (TCA) which describes a single law governing the evolution of a particular temporal fluent due to the passage of time in a particular context of an arbitrary situation. An example of a TCA was given above for $vel(b,t,s)$. We assume that a TCA for a temporal functional fluent $f$ has the general syntactic form
\begin{align}\label{eq:tca}
%t > start(s) \land 
\gamma(\bar{x}, s) \land \delta(\bar{x},y,t,s) \to f(\bar{x}, t, s) \eq y,
\end{align}
where $t$, $s$, $\bar{x}$, $y$ are variables and $\gamma(\bar{x},s)$, $\delta(\bar{x},y,t,s)$ are formulas uniform in $s$ whose free variables are among those explicitly shown. We call $\gamma(\bar{x},s)$ the \emph{context} as it specifies the condition under which the formula $\delta(\bar{x},y,t,s)$ is to be used to compute the value of fluent $f$ at time $t$. Note that contexts are time-independent. The formula $\delta(\bar{x},y,t,s)$ may
define $y$ implicitly or explicitly using arbitrary computable domain-specific constraints (algebraic, differential, logical) on variables and fluents. This formula $\delta$ may 
encode differential equations, and if they do not have exact closed-form 
analytic solution, then value of $y$ is to be computed numerically.

%For each such axiom, we require that the background theory entails
%\begin{align}\label{eq:exists}
%%t > start(s) \to 
%\gamma(\bar{x}, s) \to \exists y\, \delta(\bar{x},y,t,s).
%\end{align}
%In other words, we require that, whatever the circumstance, the axiom supplies a value for the quantity modeled by $f$ if its context is satisfied.

A set of $k$ temporal change axioms for some fluent $f$ can be equivalently expressed as an axiom of the form
\begin{align}\label{eq:nf}
%t > start(s) \to (
\Phi(\bar{x},y,t,s) \to f(\bar{x}, t, s) \eq y,
\end{align}
where $\Phi(\bar{x},y,t,s)$ is $\bigvee_{1\leq i \leq k}(\gamma_i(\bar{x}, s) \land  \delta_i(\bar{x},y,t,s))$. We additionally require that the background theory entails
\begin{align}\label{eq:unique}
\Phi(\bar{x},y,t,s) \land \Phi(\bar{x},y^\prime,t,s) \to y\eq y^\prime.
\end{align}
Condition (\ref{eq:unique}) guarantees the consistency of the axiom (\ref{eq:nf}) by preventing a continuous quantity from having more than one value at any moment of time. 
With condition (\ref{eq:unique}), we can assume  w.l.o.g. that all contexts in the given set of TCA are pairwise mutually exclusive wrt the background theory $\mathcal{D}$.

Having combined all laws which govern the evolution of $f$ with time into a single axiom (\ref{eq:nf}), we can make a causal completeness assumption: \emph{there are no other conditions under which the value of $f$ can change in $s$ from its initial value at $start(s)$ as a function of $t$}. We capture this assumption formally by the explanation closure axiom
\begin{align}\label{eq:eca}
\begin{split}
%&t > start(s) \to{}\\
&f(\bar{x}, t, s) \neq f(\bar{x},start(s), s) \to \exists y \, \Phi(\bar{x},y,t,s).
\end{split}
\end{align}

%We assume that, for each temporal change axiom of the form (\ref{eq:tca}), the background theory $\mathcal{D}$ entails $\forall(\gamma(\bar{x}, s) \to \exists y\, \delta(\bar{x},y,t,s))$. Under this assumption, the axioms (\ref{eq:nf}), (\ref{eq:eca}) in the models of (\ref{eq:unique}) are logically equivalent to the following \emph{state evolution axiom} (SEA) for the fluent $f$:
%\begin{align}\label{eq:sea2}
%\begin{split}
%&f(\bar{x},t,s)\eq y \liff [\Phi(\bar{x},y,t,s) \lor{}\\
%&\quad \neg \Psi(\bar{x},y,t,s) \land y \eq f(\bar{x},start(s), s)],
%\end{split}
%\end{align}
%where $\Psi(\bar{x},s)$ denotes $\bigvee_{1\leq i \leq k}\gamma_i(\bar{x}, s)$, a disjunction of all contexts. Note that, in the SEA, all contexts and the derived context-like condition $\neg \Psi(\bar{x},s)$ are pairwise mutually exclusive.
%Note what the state evolution axiom says: $f$ evolves with time during $s$ according to some law whose context is realized in $s$ or stays constant if no context is realized. %Also note that all contexts and the derived context-like condition $\neg \Psi(\bar{x},s)$ are pairwise mutually exclusive due to the consistency conditions (\ref{eq:exists}), (\ref{eq:unique}).

\begin{theorem}\label{th:main}
Let for each formula of the form (\ref{eq:tca}) the background theory $\mathcal{D}$ entail $\forall(\gamma(\bar{x}, s) \to \exists y\, \delta(\bar{x},y,t,s))$. Then the 
conjunction of 
axioms (\ref{eq:nf}), (\ref{eq:eca}) in the models of (\ref{eq:unique}) are logically equivalent to
\begin{align}\label{eq:sea2}
\begin{split}
&
\hspace*{-2mm}
f(\bar{x},t,s)\eq y \liff [\Phi(\bar{x},y,t,s) \lor{}\\
&\quad \quad \quad \quad \quad \quad 
	y \eq f(\bar{x},start(s), s) \land \neg \Psi(\bar{x},y,t,s)],
\end{split}
\end{align}
where $\Psi(\bar{x},s)$ denotes $\bigvee_{1\leq i \leq k}\gamma_i(\bar{x}, s)$.
 %, a disjunction of all contexts.
\end{theorem}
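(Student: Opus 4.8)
The plan is to establish the biconditional (\ref{eq:sea2}) by proving each direction of the claimed equivalence separately, reasoning inside an arbitrary model that satisfies (\ref{eq:unique}) together with the hypothesis $\gamma_i(\bar{x},s)\to\exists y\,\delta_i(\bar{x},y,t,s)$ for every disjunct $i$. Before the two directions, I would record the single observation that ties the proof together: for each fixed $\bar{x},t,s$ one has $\Psi(\bar{x},s)\liff\exists y\,\Phi(\bar{x},y,t,s)$ (recall that $\Psi$, being a disjunction of contexts, does not depend on $t$). The implication from right to left is purely syntactic, since $\Phi=\bigvee_i(\gamma_i\land\delta_i)$ entails $\bigvee_i\gamma_i=\Psi$; the implication from left to right is exactly where the hypothesis is used, turning any active context $\gamma_i(\bar{x},s)$ into a witness $y$ with $\delta_i(\bar{x},y,t,s)$, hence $\Phi(\bar{x},y,t,s)$. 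This bridge is what lets the existential guard $\exists y\,\Phi$ occurring in (\ref{eq:eca}) be replaced by the contextual guard $\neg\Psi$ occurring in (\ref{eq:sea2}).

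For the direction $(\ref{eq:nf})\land(\ref{eq:eca})\Rightarrow(\ref{eq:sea2})$ I would fix $\bar{x},t,s,y$ and argue the biconditional both ways. Reading (\ref{eq:sea2}) from right to left, if $\Phi(\bar{x},y,t,s)$ holds then (\ref{eq:nf}) gives $f(\bar{x},t,s)\eq y$ at once; if instead the persistence disjunct $y\eq f(\bar{x},start(s),s)\land\neg\Psi(\bar{x},s)$ holds, then by the bridge $\neg\Psi$ excludes every witness of $\Phi$, so the contrapositive of (\ref{eq:eca}) forces $f(\bar{x},t,s)\eq f(\bar{x},start(s),s)\eq y$. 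Reading it from left to right, I assume $f(\bar{x},t,s)\eq y$ and split on whether $f(\bar{x},t,s)\eq f(\bar{x},start(s),s)$. In the changed case (\ref{eq:eca}) produces some $y'$ with $\Phi(\bar{x},y',t,s)$, and (\ref{eq:nf}) together with $f(\bar{x},t,s)\eq y$ identifies $y'\eq y$, yielding the left disjunct; in the unchanged case I record $y\eq f(\bar{x},start(s),s)$ and sub-split on $\Psi$, using the bridge and (\ref{eq:nf}) to fall into the left disjunct when $\Psi$ holds and into the persistence disjunct when it does not.

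The converse direction $(\ref{eq:sea2})\Rightarrow(\ref{eq:nf})\land(\ref{eq:eca})$ is the easier half. Axiom (\ref{eq:nf}) is obtained by feeding an assumed $\Phi(\bar{x},y,t,s)$ into the right-to-left half of the biconditional (\ref{eq:sea2}) at the same $y$. Axiom (\ref{eq:eca}) is obtained by instantiating the left-to-right half of (\ref{eq:sea2}) at $y\isdef f(\bar{x},t,s)$: the resulting disjunction cannot use its persistence disjunct, since that would yield $f(\bar{x},t,s)\eq f(\bar{x},start(s),s)$ contrary to the antecedent of (\ref{eq:eca}), so $\Phi(\bar{x},y,t,s)$ holds and supplies the required witness.

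The delicate point is the unchanged case of the left-to-right implication, where the two disjuncts of (\ref{eq:sea2}) compete: I must argue that whenever a context is active the value is genuinely pinned down by $\Phi$ rather than merely inherited from $start(s)$. The bridge handles the logical content, while (\ref{eq:unique}) is what keeps the whole set-up consistent, guaranteeing that the $\Phi$-value cannot clash with the value the biconditional would otherwise assign; for this reason the equivalence is asserted only within the models of (\ref{eq:unique}). I would close by noting that the $\Psi/\neg\Psi$ split makes the two disjuncts on the right of (\ref{eq:sea2}) mutually exclusive, so that (\ref{eq:unique}) is precisely the condition certifying that the right-hand side defines $f(\bar{x},t,s)$ as a single-valued function of its arguments.
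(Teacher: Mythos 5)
Your proof is correct, but it takes a genuinely different route from the paper's. The paper's appendix proof mirrors Reiter's classical derivation of successor state axioms from effect axioms plus explanation closure: it first normalizes the contexts to be pairwise mutually exclusive (its Lemma~1, which itself leans on (\ref{eq:unique})), then marches through a chain of syntactic equivalences --- a negative normal form NNFCA, two reformulations of (\ref{eq:eca}) and their variants, and an intermediate axiom SEA1 whose second disjunct reads $y = f(\bar{x},start(s),s) \land \neg\exists z(\Phi(\bar{x},z,t,s) \land y \neq z)$ --- and only at the very last step invokes the hypothesis $\gamma_i(\bar{x},s) \to \exists y\,\delta_i(\bar{x},y,t,s)$ to replace that existential guard by $\neg\Psi$. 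You instead prove the equivalence directly and model-theoretically, organized around a single bridge lemma $\Psi(\bar{x},s) \leftrightarrow \exists y\,\Phi(\bar{x},y,t,s)$ (purely syntactic in one direction, the well-definedness hypothesis in the other), and where the paper's manipulations appeal to (\ref{eq:unique}) you appeal to the functionality of the symbol $f$ together with (\ref{eq:nf}); indeed in your argument (\ref{eq:unique}) is never load-bearing, which is consistent, since both (\ref{eq:nf}) and (\ref{eq:sea2}) semantically entail (\ref{eq:unique}) precisely because $f$ is interpreted as a function. Your approach buys brevity, avoids the intermediate normal forms, and dispenses with the mutual-exclusivity lemma entirely; the paper's approach buys an explicit intermediate form (SEA1) that is valid even without the well-definedness hypothesis, and makes the methodological parallel with Reiter's SSA derivation transparent. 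One presentational nit: your closing remark that (\ref{eq:unique}) is ``precisely the condition certifying'' single-valuedness of the right-hand side of (\ref{eq:sea2}) is a motivating comment rather than a step of the proof --- as your own case analysis shows, the delicate cases are discharged by the bridge lemma and the functionality of $f$, not by (\ref{eq:unique}).
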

\begin{proof}
Full proof is provided in the Appendix.
\end{proof}

We call the formula (\ref{eq:sea2}) a \emph{state evolution axiom} (SEA) for the fluent $f$. %Note that, in the SEA, all contexts and the derived context-like condition $\neg \Psi(\bar{x},s)$ are pairwise mutually exclusive.
Note what the SEA says: $f$ evolves with time during $s$ according to some law whose context is realized in $s$ or stays constant if no context is realized. %Also note that all contexts and the derived context-like condition $\neg \Psi(\bar{x},s)$ are pairwise mutually exclusive due to the consistency conditions (\ref{eq:exists}), (\ref{eq:unique}).
Our causal completeness assumption (\ref{eq:eca}) applies both to physical quantities 
and  to their derivatives. This assumption simply states that all reasons for
change have been already accounted for in  (\ref{eq:nf}), nothing is missed.
It is important to realize that $\mathcal{D}_{se}$, a set of SEAs, 
complements the SSAs that are derived in \cite{reiter1991} using similar technique.

\subsection{Temporal Basic Action Theories}

The SEA for some temporal fluent $f$ does not completely specify the behaviour of $f$ because it talks only about change within a single situation $s$. To complete the picture, we need a SSA describing how the value of $f$ changes (or does not change) when an action is performed. A straightforward way to 
accomplish this would be by an axiom which would enforce continuity, e.g., $f(\bar{x},time(a),do(a,s)) \eq f(\bar{x},time(a),s)$. 
However, this choice would preclude the ability to model action-induced discontinuous jumps in the value of the continuously varying quantities or their derivatives, such as the sudden change of acceleration from $0$ to $-9.8 m/s^2$ when an object is dropped. To circumvent this limitation, for each temporal functional fluent $f(\bar{x}, t, s)$, we introduce an auxiliary atemporal functional fluent $f\init(\bar{x},s)$ whose value in $s$ represents the value of the physical quantity modelled by $f$ in $s$ at the time instant $start(s)$. We axiomatize $f\init$ using a SSA derived from the axioms
\cbstart
\begin{align*}
&e(\bar{x},y,a,s) \to f\init(\bar{x}, do(a,s)) = y,\\\
&\neg\exists y(e(\bar{x},y,a,s)) \to f\init(\bar{x}, do(a,s)) \eq f(\bar{x},time(a),s),
\end{align*}
where the former is a Reiter's effect axiom in normal form and the latter asserts that if no relevant effect is invoked by the action $a$, $f\init$ assumes the most recent value of the continuously evolving fluent $f$. This latter axiom enforces temporal continuity in the value of $f$ in the case when there is no reason for change.
\cbend

%\begin{framed}
%Important: we assume (although don't say) here that $init$ fluents follow the Davis/Haas/Schubert/Pednault solution to the frame problem. For our purposes, this solution must be amended; namely, for the effect axiom in normal form
%\begin{align*}
%\gamma_{f\init}(\bar{x},y,a,s) \to f\init(\bar{x}, do(a,s)) = y,
%\end{align*}
%the frame axiom should be
%\begin{align*}
%\neg(\exists y)\gamma_{f\init}(\bar{x},y,a,s) \to f\init(\bar{x}, do(a,s)) = f(\bar{x},time(a),s)!
%\end{align*}
%This is a part of the methodology, a major contribution, and must be mentioned.
%\end{framed}
 The SSA for $f\init$ has the general syntactic form
\begin{align}\label{eq:init}
\begin{split}
&f\init(\bar{x}, do(a,s)) \eq y \liff \Omega(\bar{x}, y, a, s),
\end{split}
\end{align}
where $\Omega(\bar{x},y, a, s)$ is a formula uniform in $s$ %with free variables among $\bar{x},y^\prime, y, a, s$ 
whose purpose is to describe how the initial value of $f$ in $do(a,s)$ relates to its value at the same time instant in $s$ (i.e., prior to $a$). To establish a consistent relationship between temporal fluents and their atemporal \emph{init}-counterparts, we require that, in an arbitrary situation, the continuous evolution of each temporal fluent $f$ starts with the value computed for $f\init$ by its successor state axiom.

%the set of all SSA and SEA entail $f(\bar{x},start(s),s) \eq f\init(\bar{x},s)$ for all temporal fluents $f$ of the theory. 

%\begin{defn}
A \emph{temporal basic action theory} is a collection of axioms $\mathcal{D} = \Sigma \cup \mathcal{D}_{ss} \cup \mathcal{D}_{ap} \cup \mathcal{D}_{una} \cup \mathcal{D}_{S_0} \cup \mathcal{D}_{se}$ such that
\begin{enumerate}\setlength\itemsep{0.1em}
\item Every action symbol mentioned in $\mathcal{D}$ is temporal;
\item $\Sigma \cup \mathcal{D}_{ss} \cup \mathcal{D}_{ap} \cup \mathcal{D}_{una} \cup \mathcal{D}_{S_0}$ constitutes a BAT as per  Definition 4.4.5 in \cite{reiter};
\cbstart
\item $\mathcal{D}_{se}$ is a set of state evolution axioms of the form
\begin{align}\label{eq:seagf}
f(\bar{x},t,s) \eq y \liff \psi_f(\bar{x},t,y,s)
\end{align}
where $\psi_f(\bar{x},t,y,s)$ is uniform in $s$, such that for every temporal functional fluent $f$ in $\mathcal{D}_{se}$, $\mathcal{D}_{ss}$ contains an additional SSA of the form (\ref{eq:init}) for $f\init$;
\item For each SEA of the form (\ref{eq:seagf}), the following consistency properties hold:
\begin{align}\label{eq:func-consistency}
\begin{split}
&\mathcal{D}_{una} \cup \mathcal{D}_{S_0} \models{}\\ %\cup \mathcal{D}_{ss}\cup \mathcal{D}_{se} \models{}\\
&\quad \forall \bar{x} \forall t. \;\exists y (\psi_f(\bar{x}, t, y, s)) \land{} \\
&\quad \forall y \forall y^\prime (\psi_f(\bar{x}, t, y, s) \land \psi_f(\bar{x}, t, y^\prime, s) \to y \eq y^\prime),
\end{split}
\end{align}
\begin{align}\label{eq:init-consistency}
\begin{split}
&\mathcal{D}_{una} \cup \mathcal{D}_{S_0} \models{} \\ %  \cup \mathcal{D}_{ss}\cup \mathcal{D}_{se} \models \\
%&\quad f(\bar{x},start(s),s) \eq y \land 
&\quad \exists y ( f\init(\bar{x},s) \eq y \land{} \psi_f(\bar{x},start(s), y, s)); % do we need \mathcal{D}_{ss} here? - open question
\end{split}
\end{align}
\end{enumerate}
%\end{defn}
%
A set $\mathcal{D}_{se}$ of SEA (\ref{eq:seagf}) is \emph{stratified} iff there are no temporal fluents $f_1, \ldots, f_n$ such that $f_1 \succ f_2 \succ \ldots \succ f_n \succ f_1$ where $f \succ f'$ holds iff there is a SEA in $\mathcal{D}_{se}$ where $f$ appears on the left-hand side and $f'$ on the right-hand side. A temporal BAT is \emph{stratified} iff its $\mathcal{D}_{se}$ is.

Similarly to Reiter's BATs, temporal BATs have a relative satisfiability property.
%wrt $\mathcal{D}_{una}$ and $\mathcal{D}_{S_0}$. % and $\mathcal{D}_{se}|_{S_0}$, where $\mathcal{D}_{se}|_{S_0}$ denotes $\mathcal{D}_{se}$ wrt $S_0$ only. Informally, the latter axioms are needed because, syntactically, SEA are state constraints not entailed by $\mathcal{D}_{una}\cup \mathcal{D}_{S_0}$ alone.
%
%Although slightly weaker than the original, 
%This important result provides means of checking satisfiability of the theory without having to reason about situations using the second-order induction axiom of $\Sigma$.
%\begin{theorem}
%A temporal BAT $\mathcal{D}$ is satisfiable iff $\mathcal{D}_{una} \cup \mathcal{D}_{S_0}$ is satisfiable.
%\end{theorem}
%We say that a set $\mathcal{D}_{se}$ of axioms (\ref{eq:seagf}) is \emph{stratified} if there are no temporal functional fluents $f_1, \ldots, f_n$ such that $f_1 \succ f_2 \succ \ldots \succ f_n \succ f_1$ where the relation $f \succ f'$ holds iff there is a SEA in $\mathcal{D}_{se}$ where $f$ appears on the left-hand side and $f'$ on the right-hand side.
\begin{theorem}
A stratified temporal BAT $\mathcal{D}$ is satisfiable iff $\mathcal{D}_{una} \cup \mathcal{D}_{S_0}$ is satisfiable. %  \cup \mathcal{D}_{se}|_{S_0}$ 
\end{theorem}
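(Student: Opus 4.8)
The plan is to establish the non-trivial (right-to-left) direction by an explicit model construction, following the pattern of Reiter's proof of relative satisfiability (Theorem 4.4.6 in \cite{reiter}) but augmenting it with a second, stratification-driven layer to account for the temporal fluents. The left-to-right direction is immediate, since satisfiability is inherited by every subtheory, so a model of $\mathcal{D}$ is in particular a model of $\mathcal{D}_{una} \cup \mathcal{D}_{S_0}$.

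For the converse, I would start from an arbitrary model $M_0$ of $\mathcal{D}_{una} \cup \mathcal{D}_{S_0}$ and extend it to a model $M$ of all of $\mathcal{D}$. Because the foundational axioms $\Sigma$ force the situations to form the tree freely generated from $S_0$ by $do$, and the time, object and action sorts together with $time$, $start$ and the initial data are already fixed by $M_0$, it suffices to define, by recursion on this situation tree, the interpretation of $Poss$, of the atemporal fluents (including each $f\init$), and of the temporal fluents $f$. The base case is $S_0$, where the atemporal data and each $f\init(\bar{x}, S_0)$ are supplied by $\mathcal{D}_{S_0}$. In the inductive step for $do(a,s)$, I would first read off $Poss(a,s)$ from the precondition axioms in $\mathcal{D}_{ap}$ (legitimate because every fluent at $s$ is already defined), then set the atemporal fluents and each $f\init(\bar{x}, do(a,s))$ using their successor state axioms; note that the SSA for $f\init$ of the form (\ref{eq:init}) refers to the temporal fluent $f$ only at the earlier situation $s$, through $f(\bar{x}, time(a), s)$, so its value is already available.

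The crux is the remaining task inside each situation: given the atemporal data and the values $f\init(\bar{x}, s)$, define every temporal fluent $f(\bar{x}, t, s)$ for all times $t$. Here I would invoke the state evolution axioms (\ref{eq:seagf}): for fixed $\bar{x}$ and $t$, the value of $f$ is the unique $y$ with $\psi_f(\bar{x}, t, y, s)$, whose existence and uniqueness are exactly the consistency property (\ref{eq:func-consistency}) guaranteed to hold in $M_0$. The difficulty is that $\psi_f$ may mention other temporal fluents on its right-hand side, so this is not a direct definition. This is precisely where stratification is used: since the relation $\succ$ is acyclic, I can order the temporal fluents so that each $f$ is defined only after all temporal fluents it depends on, carrying out an inner induction along $\succ$ so that every reference on the right-hand side of $f$'s SEA is already interpreted. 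Finally, the init-consistency property (\ref{eq:init-consistency}) ensures that the value assigned to $f(\bar{x}, start(s), s)$ coincides with $f\init(\bar{x}, s)$, so the temporal and atemporal layers cohere.

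It then remains to verify that the structure $M$ so defined satisfies every component of $\mathcal{D}$: $\Sigma$ holds by the free-tree construction, $\mathcal{D}_{ap}$ and $\mathcal{D}_{ss}$ by the recursive read-off, $\mathcal{D}_{se}$ by the stratified definition of the temporal fluents, and $\mathcal{D}_{una} \cup \mathcal{D}_{S_0}$ because $M$ extends $M_0$ conservatively on the vocabulary these axioms constrain. I expect the main obstacle to be making the two interleaved recursions rigorous and confirming that the apparent circularity between $f\init$ and $f$ is genuinely broken: the cross-situation dependency of $f\init(\cdot, do(a,s))$ on $f$ runs strictly backward in time, to $s$, and is discharged by the outer recursion, while the within-situation dependencies among temporal fluents are discharged by the inner $\succ$-recursion, the two together being well-founded exactly under the stratification hypothesis.
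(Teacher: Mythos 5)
Your construction is essentially the paper's own proof: the paper likewise starts from a model $\mathcal{M}_0$ of $\mathcal{D}_{una} \cup \mathcal{D}_{S_0}$, builds $\mathcal{M}$ via the Pirri--Reiter relative-satisfiability construction (your outer recursion on the situation tree), and then interprets each temporal fluent at every situation by stipulating $f(\bar{x},t,s) \eq y$ iff $\psi_f(\bar{x},t,y,s)$, proceeding by induction on strata and invoking (\ref{eq:func-consistency}) for existence and uniqueness of the value and (\ref{eq:init-consistency}) for coherence with $f\init$ --- exactly your inner recursion. The only divergence is cosmetic: at $S_0$ the paper pins the temporal fluents to their $\mathcal{M}_0$-values rather than rederiving them from the SEAs, but both glosses rest on the same implicit assumption that $\mathcal{D}_{S_0}$ constrains the initial values of the continuous quantities only through the $f\init$ fluents.
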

\begin{proof}[Proof (sketch)]
The proof extends that of Theorem 1 in \cite{pirri99}. We start with a model $\mathcal{M}_0$  of $\mathcal{D}_{una} \cup \mathcal{D}_{S_0} $ and show how to build a model $\mathcal{M}$ of $\mathcal{D}$. Let $\mathcal{M}$ be constructed from $\mathcal{M}_0$ as in \citeauthor{pirri99} with the additional condition that for each temporal fluent $f$, $f^\mathcal{M}(\bar{x}[v_0], t[v_0], S_0^\mathcal{M}) = f^{\mathcal{M}_0}(\bar{x}[v_0], t[v_0], S_0^{\mathcal{M}_0})$, so that $\mathcal{D}$ is satisfied by $\mathcal{M}$ at $S_0$.
Assume that $\mathcal{M}$ interprets all symbols except temporal fluents at an arbitrary situation $s$. For each temporal fluent $f$ with SEA of the form (\ref{eq:seagf}), let $\mathcal{M},v \models f(\bar{x},t,s) \eq y$ iff $\mathcal{M},v \models \psi_f(\bar{x},t,y,s)$. This is well-defined because the right-hand side of the SEA for the lowest-stratum temporal fluent does not mention other temporal fluents and therefore has already been assigned a truth value. The remaining temporal fluents are assigned values by induction on strata. By (\ref{eq:func-consistency}), these values are unique and, by (\ref{eq:init-consistency}), consistent with the discrete dynamics of temporal fluents.
\end{proof}
\cbend

\section{An Example}\label{s:example}
%\begin{example}
Consider a macroscopic urban traffic domain along the lines of \cite{traffic}. For simplicity, we consider a single intersection of two 2-lane roads. Facing the intersection $i$ are 4 incoming and 4 outgoing road segments. Depending on the traffic light, a car may turn left, turn right, or drive straight from an incoming lane to the corresponding outgoing lane. The layout of the intersection $i$ is shown on Figure \ref{fig:intersection} (left). Each lane is denoted by a constant and each path through the intersection $i$ is encoded using the static predicates $st(i, r_1, r_2)$ (straight connection from lane $r_1$ to $r_2$ at intersection $i$), $lt(i, r_1, r_2)$ (left turn), and $rt(i, r_1, r_2)$ (right turn). The number of cars per unit of time that can pass through each connection is specified by the function $flow(i, r_1, r_2)$. 

The outgoing lanes are assumed to be of infinite capacity and are not modelled. The traffic lights are controlled by a simple looping automaton with the states $Green(i,r,s)$ (from lane $r$, go straight or turn right), followed by $RArr(i,r,s)$ (\emph{right arrow}, i.e., only turn right), followed by $Red(i,r,s)$ (stop), and then $LArr(i,r,s)$ (\emph{left arrow}, i.e., only turn left), such that mutually orthogonal directions are in antiphase to each other. The switching between these states for all $r$ is triggered by the action $switch(i,t)$ with precondition $Poss(switch(i, t),s) \liff start(s) \lleq t$ via a set of simple SSA, e.g.,
\begin{align*}
& Green(i, r, do(a,s)) \liff a \eq switch(i, t) \land LArr(i, r,s) \lor{}\\
&\quad Green(i, r,s) \land a \nneq switch(i, t).
\end{align*}
These SSA ensure the correct order of signals as shown on Figure \ref{fig:intersection} (right). To rule out multiple signals and maintain the correct correspondence between the signals for intersecting directions, we require that the initial state axioms entail a set of simple state constraints.

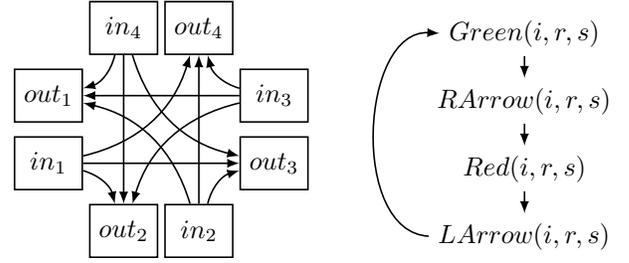
\begin{figure}[h]
\centering
\subfloat{
\begin {tikzpicture}[-latex, auto, node distance=0.9 cm and 1cm, on grid, semithick, state/.style ={draw,minimum width=0.9 cm, minimum height=0.7cm,}]
\node[state,] (a){${out_1}$};
%\node[state] (b) [left=of a] {};
%\node[state] (c) [below=of b] {};
\node[state] (d) [below=of a] {$in_1$};
\node[state,] (e) [below right=of d] {${out_2}$};
%\node[state] (f) [below=of e] {};
%\node[state] (g) [right=of f] {};
\node[state] (h) [right =of e] {$in_2$};
\node[state,] (i) [above right=of h] {${out_3}$};
%\node[state] (j) [right=of i] {};
%\node[state] (k) [above=of j] {};
\node[state] (l)  [above=of i] {$in_3$};
\node[state,] (m) [above left=of l] {${out_4}$};
%\node[state] (n) [above=of m] {};
%\node[state] (o) [left=of n] {};
\node[state] (p)  [left=of m] {$in_4$};

%\path (a) edge node[above] {$r_2$} (b);
%\path (c) edge node[below] {$r_1$} (d);
%\path (e) edge node[left] {$r_8$} (f);
%\path (g) edge node[right] {$r_7$} (h);
%\path (i) edge node[below] {$r_6$} (j);
%\path (k) edge node[above] {$r_5$} (l);
%\path (m) edge node[right] {$r_4$} (n);
%\path(o) edge node[left] {$r_3$} (p);

\path[]  (p) edge [ bend left=30] (a);
\path[]  (p) edge [ bend left=0] (e);
\path[]  (p) edge [ bend right=30] (i);

\path[]  (d) edge [ bend left=30] (e);
\path[]  (d) edge [ bend left=0] (i);
\path[]  (d) edge [ bend right=30] (m);

\path[]  (h) edge [ bend left=30] (i);
\path[]  (h) edge [ bend left=0] (m);
\path[]  (h) edge [ bend right=30] (a);

\path[]  (l) edge [ bend left=30] (m);
\path[]  (l) edge [ bend left=0] (a);
\path[]  (l) edge [ bend right=30] (e);
\end{tikzpicture}
}\quad
\subfloat{
\begin {tikzpicture}[-latex, auto, node distance=0.9 cm and 1cm, on grid, semithick, state/.style ={minimum width=0.1 cm, minimum height=0.1cm,}]
\node[state] (green){$Green(i,r,s)$};
\node[state] (rarrow) [below =of green] {$RArrow(i,r,s)$};
\node[state] (red)[below =of rarrow]{$Red(i,r,s)$};
\node[state] (larrow)[below =of red]{$LArrow(i,r,s)$};

\path[]  (green) edge [ bend left=0] (rarrow);
\path[]  (rarrow) edge [ bend left=0] (red);
\path[]  (red) edge [ bend left=0] (larrow);
\path[]  (larrow) edge [ bend left=90] (green);
\end{tikzpicture}
}
\caption{Intersection layout and the states of the traffic lights}\label{fig:intersection}
\end{figure}
 
The continuous quantity that we wish to model is the number of cars queued up in each incoming lane. For that, we use the temporal functional fluent $que(i,r,t,s)$ and its atemporal counterpart $que\init(i,r,s)$ --- the number of cars at intersection $i$ waiting in lane $r$ in situation $s$ at times $t$ and $start(s)$, respectively. Since the lane $r$ may run dry and thus affect the continuous dynamics of queues, we call on the natural action $empty(i,r,t)$ to change the relational state when that happens:
\begin{align*}
&Poss(empty(i, r, t), s) \liff start(s) \lleq t \land que(i, r, t, s) \eq 0,\\
& a \eq empty(i, r,t) \land y \eq 0 \to que\init(i, r,do(a,s)) \eq y,\\
& a \nneq empty(i, r,t) \land y \eq que(i, r,time(a),s) \\
&\qquad \to que\init(i, r,do(a,s)) \eq y.
\end{align*}

The left-hand sides of the above effect axioms for $que\init$ are mutually exclusive, so the right-hand side of the SSA for $que\init(i,r,do(a,s))$ is simply a disjunction thereof:
\begin{align*}
&que\init(i, r,do(a,s)) \eq y \liff  [a \eq empty(i, r,t) \land y \eq 0] \lor{}\\
&\quad [a \nneq empty(i, r,t) \land y \eq que(i, r,time(a),s)].
\end{align*}

We can now formulate the TCA for $que$ %in the form (\ref{eq:nf}) 
according to traffic rules, i.e., cars move when they are allowed to. Cars do not move at a red light, resulting in the simple TCA
\begin{align*}
\big[Red(i,r,s) \land  y \eq que\init(i,r, s) \big] \to que(i, r, t, s) \eq y.
\end{align*}

When the signal for a non-empty lane $r$ is the left arrow, the queue of $r$ decreases linearly from its initial size with the rate associated with the left turn:
\begin{align*}
&\big[LArr(i,r,s)\land que\init(i,r,s) \nneq 0 \land \exists r'  [lt(i,r, r') \land{}\\
&\quad y \eq ( que\init(i,r,s)\!-\!flow(i,r,r')\!\cdot\!(t\!-\!start(s))]\big]\\
&\qquad \to que(i,r, t, s) \eq y.
\end{align*}

The queue of a non-empty lane with signal $RArr(i,r,s)$ decreases similarly. For the signal $Green(i,r,s)$, the queue decreases with a combined rate of the straight connection and the right turn:
\begin{align*}
&\big[Green(i,r,s)\land que\init(i,r,s) \nneq 0 \land{}\\
&\; \exists r'  \exists r''[st(i,r, r') \land rt(i,r,r'') \land y \eq ( que\init(i,r,s) -\\
&\qquad (flow(i,r,r')\!+\!flow(i,r,r''))\!\cdot\!(t\!-\!start(s))]\big]\\
&\qquad\qquad \to que(i,r, t, s) \eq y.
\end{align*}

 %Interestingly, since time is real-valued, the number of cars also becomes real-valued.
%\begin{align*}
%&\big[LArrow(r,s)\land noc\init(r,s) \nneq 0 \land \exists f_L  [lt(r, f_L) \land{}\\
%& y \eq ( noc\init(r,s)\!-\!f_L\!\cdot\!(t\!-\!start(s))]\big] \lor{}\\
%&\big[Green(r,s) \land noc\init(r,s) \nneq 0 \land \exists f_S f_R [st(r, f_S) \land{}\\
%&rt(r, f_R) \land y \eq  noc\init(r,s)\!-\!(f_S\!+\!f_R)(t\!-\!start(s))]\big]\lor{}\\
%&\big[RArrow(r,s) \land noc\init(r,s) \nneq 0 \land \exists f_R [rt(r, f_R) \land{}\\
%& y \eq  noc\init(r,s) - f_R\cdot(t - start(s))] \big] \lor{}\\
%&\big[Red(r,s) \land  y \eq noc\init(r, s) \big] \to noc(r, t, s) \eq y.
%\end{align*}
From these TCA, by Theorem \ref{th:main}, we obtain a SEA (simplified for brevity)
\begin{align*}
&que(i,r,t,s) \eq y \liff{} (\exists \tau \exists q_0 \exists r_L \exists r_S \exists r_R).\\
&\quad  \tau \eq (t\!-\!start(s)) \land q_0 \eq que\init(i,r, s) \land{}\\
&\quad  lt(i,r, r_L) \land st(i,r, r_S) \land rt(i,r, r_R) \land{}\\
&\quad \big[LArr(i,r,s) \land q_0 \nneq 0 \land y \eq (q_0\!-\!flow(i,r,r_L) \cdot \tau) \lor{}\\
&\quad \phantom{\big[}Green(i,r,s) \land q_0 \nneq 0 \land{}\\
&\qquad\qquad  y \eq  (q_0\!-\!(flow(i,r,r_S)\!+\!flow(i,r,r_R)) \cdot \tau) \lor{}\\
&\quad \phantom{\big[}RArr(i,r,s) \land q_0 \nneq 0 \land y \eq  (q_0\!-\!flow(i,r,r_R) \cdot \tau) \lor{}\\
&\quad \phantom{\big[}Red(i,r,s) \land y \eq q_0 \lor{}\\
&\quad \phantom{\big[}\neg Red(i,r,s) \land q_0 \eq 0 \land y \eq 0\big]. % frame! not trivial!
\end{align*}
Notice that the last line comes not from the TCA but from the 
explanation closure (\ref{eq:eca})
enforced by Theorem \ref{th:main} and asserts the constancy of $que$ in the context which the TCA did not cover (movement is allowed but the lane is empty).

In general, the modeller only needs to supply the TCA for the contexts where the quantity changes with time.

\section{Regression}\label{s:regression}

Projection is a ubiquitous computational problem concerned with establishing the truth value of a statement after 
executing a given sequence of actions.
% that must be addressed in every model of a dynamic system. 
We solve it with the help of regression. It turns out that the notions of uniform and regressable formulas trivially extend to temporal BATs. In this section, for the sake of simplicity and practicality, without loss of generality, we assume that each regressable formula can mention $do$ only in a situation term or within the function $start$ in temporal arguments. The regression operator $\mathcal{R}$ as defined for atemporal BATs in Definition 4.7.4 of \cite{reiter} can be extended to temporal BATs in a straightforward way. When $\mathcal{R}$ is applied to a regressable formula $W$, $\mathcal{R}[W]$ is determined relative to a temporal BAT. We extend $\mathcal{R}$ as follows.

%\begin{defn}
Let $\mathcal{D}$ be a temporal BAT, and let $W$ be a regressable formula.
If $W$ is a non-fluent atom that mentions $start(do(\alpha,\sigma))$, then $\mathcal{R}[W] = \mathcal{R}[W|_{time(\alpha)}^{start(do(\alpha,\sigma))}]$.
%If $W$ is a non-$Poss$ atom and if $W$ mentions a term of the form $g(\bar{C}^\prime, \sigma)$ for some  functional fluent $g$ (atemporal or temporal), then, by the definition of regressable formulas, $g(\bar{C}^\prime,\sigma)$ mentions a functional fluent term whose nested terms of sort $\textsc{object}$, $\textsc{action}$, or $\textsc{reals}$ are uniform in $\sigma$. This term is either atemporal or temporal. 
If $W$ is a non-$Poss$ atom and mentions a functional fluent uniform in $\sigma$, then this term is either atemporal or temporal. The former case is covered by Reiter. In the latter case, the term is of the form $f(\bar{C}, \tau^\star, \sigma)$ and has a SEA $f(\bar{x},t,s) \eq y \liff \psi_f(\bar{x},t,y,s)$, so we rename all quantified variables in $\psi_f(\bar{x},t,y,s)$ to avoid conflicts with the free variables of $f(\bar{C},\tau^\star,\sigma)$ and define $\mathcal{R}[W]$ to be % ditch generality, improve readability
\begin{align*}
&\mathcal{R}[\exists y. \,(\tau^\star \eq start(\sigma) \land y \eq f\init(\bar{x},\sigma) \lor{}\\
&\hphantom{\mathcal{R}[}\tau^\star \nneq start(\sigma) \land \psi_f(\bar{C},\tau^\star,y,\sigma)) \land W|^{f(\bar{C},\tau^\star, \sigma)}_y],
\end{align*}
where $y$ is a new variable not occurring free in $W$, $\bar{C}$, $\tau^\star$, $\sigma$. Intuitively, this transformation replaces the temporal fluent $f$ with either the value of $f\init$ if $f$ is evaluated at the time of the last action or, otherwise, with the value determined by the right-hand side of the SEA for $f$.
%\end{defn}

%A set of SEA (\ref{eq:seagf}) is \emph{stratified} if there are no temporal functional fluents $f_1, \ldots, f_n$ such that $f_1 \succ f_2 \succ \ldots \succ f_n \succ f_1$ where $f \succ f'$ holds iff there is a SEA in $\mathcal{D}_{se}$ where $f$ appears on the left-hand side and $f'$ on the right-hand side.
\begin{theorem}
If $W$ is a regressable sentence of SC and $\mathcal{D}$ is a stratified temporal basic action theory, then
\begin{align*}
\mathcal{D} \models W \quad\text{iff}\quad \mathcal{D}_{S_0} \cup \mathcal{D}_{una} %\cup \mathcal{D}_{se}|_{S_0} 
\models \mathcal{R}[W].
\end{align*}
\end{theorem}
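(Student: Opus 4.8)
The plan is to reduce the claimed biconditional to two facts and then dispatch each in turn. The first is a \emph{regression lemma} asserting that $\mathcal{R}$ preserves meaning relative to the theory, i.e. $\mathcal{D} \models (W \liff \mathcal{R}[W])$; the second is that $\mathcal{R}[W]$ is a first-order sentence uniform in $S_0$, so that relative satisfiability (Theorem 2) allows the dynamic axioms to be discarded when evaluating its entailment. Granting these, we close the argument by the chain $\mathcal{D} \models W$ iff $\mathcal{D} \models \mathcal{R}[W]$ (regression lemma) iff $\mathcal{D}_{S_0} \cup \mathcal{D}_{una} \models \mathcal{R}[W]$ (relative satisfiability applied to a sentence about $S_0$ only). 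This mirrors the architecture of Reiter's proof of Theorem 4.5.5, with the temporal machinery isolated in the new cases.

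First I would prove the regression lemma by structural induction on the regressable formula $W$, the interesting cases being the atoms. The standard cases, namely $Poss(A(\bar t),\sigma)$, relational fluent atoms $F(\bar t, do(\alpha,\sigma))$, and atemporal functional fluent terms, are exactly Reiter's and are justified by the action precondition axioms and the successor state axioms in $\mathcal{D}_{ap}$ and $\mathcal{D}_{ss}$. Two atomic cases are new. For a non-fluent atom mentioning $start(do(\alpha,\sigma))$, the soundness of replacing it by $time(\alpha)$ is immediate from the foundational axiom $start(do(a,s)) \eq time(a)$. For a temporal functional fluent term $f(\bar{C},\tau^\star,\sigma)$, I would establish that $\mathcal{D}$ entails
\begin{align*}
&f(\bar{C},\tau^\star,\sigma) \eq y \liff{}\\
&\quad [\tau^\star \eq start(\sigma) \land y \eq f\init(\bar{C},\sigma)] \lor{}\\
&\quad [\tau^\star \nneq start(\sigma) \land \psi_f(\bar{C},\tau^\star,y,\sigma)].
\end{align*}
The right branch is just the state evolution axiom~(\ref{eq:seagf}); the left branch is licensed by the init-consistency property~(\ref{eq:init-consistency}), which forces $f\init(\bar{C},\sigma)$ and the value assigned by $\psi_f$ at $start(\sigma)$ to coincide, while functional consistency~(\ref{eq:func-consistency}) guarantees that $y$ is uniquely determined. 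The uniqueness is precisely what makes the standard manipulation valid: a functional term $f(\bar{C},\tau^\star,\sigma)$ occurring in $W$ may be pulled out via a fresh existential variable $y$, yielding the displayed rewrite of $\mathcal{R}[W]$.

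The step I expect to be the main obstacle is \emph{termination}, i.e. showing that the recursive rewriting actually halts in a formula uniform in $S_0$; this is where stratification is indispensable. Unlike a Reiter SSA, the body $\psi_f$ of a SEA may mention other temporal fluents evaluated at the \emph{same} situation $\sigma$, so regressing $f$ need not reduce the nesting depth of $do$ and could, a priori, loop. To control this I would introduce a lexicographic measure on fluent occurrences combining (i) the $do$-nesting of the situation argument and (ii) the fluent's rank in the $\succ$ order. Regressing $Poss$, relational fluents, and atemporal functional fluents strictly lowers the $do$-depth as in Reiter. Regressing a temporal fluent either takes the $f\init$ branch, replacing $f(\bar{C},start(\sigma),\sigma)$ by the ordinary functional fluent $f\init(\bar{C},\sigma)$ whose SSA~(\ref{eq:init}) then strictly lowers the $do$-depth, or takes the $\psi_f$ branch, which replaces $f$ only by temporal fluents of strictly smaller rank within the same $\sigma$. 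Since the stratification hypothesis rules out cycles $f_1 \succ \cdots \succ f_n \succ f_1$, the second component cannot decrease forever without the first doing so, and the lexicographic measure is well-founded; hence the recursion bottoms out at $S_0$.

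Finally I would conclude that the resulting $\mathcal{R}[W]$ contains no $Poss$ atoms, no $\sqsubseteq$, and only $S_0$ as a situation term, so it is a first-order sentence uniform in $S_0$. For such a sentence the truth value is fixed once the interpretation over $S_0$ is fixed, and the dynamic axioms $\Sigma \cup \mathcal{D}_{ss} \cup \mathcal{D}_{ap} \cup \mathcal{D}_{se}$ impose no further constraint on it: by the relative satisfiability construction (Theorem 2), every model of $\mathcal{D}_{una} \cup \mathcal{D}_{S_0}$ extends to a model of $\mathcal{D}$ agreeing on all symbols over $S_0$. Therefore $\mathcal{D} \models \mathcal{R}[W]$ iff $\mathcal{D}_{una} \cup \mathcal{D}_{S_0} \models \mathcal{R}[W]$, and combining this with the regression lemma gives the stated equivalence.
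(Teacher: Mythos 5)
Your proposal is correct and takes essentially the same route as the paper's proof, which likewise follows the Pirri--Reiter regression theorems: the identical two-branch rewrite of a temporal fluent term $f(\bar{C},\tau^\star,\sigma)$ into an $f\init$ case and a $\psi_f$ case (justified by the SEA together with the init-consistency property), termination via stratification, and a final appeal to relative satisfiability to discard the dynamic axioms. Your explicit lexicographic measure on (do-depth, stratum) is simply a more detailed rendering of the termination argument the paper sketches, including its parenthetical observation that regressing SSAs and precondition axioms can raise the maximum stratum, but only finitely many times.
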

\begin{proof}[Proof (sketch)]
The proof proceeds exactly like that of Theorem 2 in \cite{pirri99} with the following addition. Since the right-hand side of a SEA is uniform in $s$, we only need to show that all occurrences in $W$ of temporal fluents are eventually replaced with logically equivalent expressions which can be regressed by usual means.

As long as variables are properly renamed, we have
\begin{align*}
\mathcal{D} \models (\forall)(W \leftrightarrow \exists y(\psi_f(\bar{C},\tau^\star,y,\sigma) \land W|^{f(\bar{C},\tau^\star, \sigma)}_y)).
\end{align*}
By the property (\ref{eq:init-consistency}) of a temporal BAT, $\mathcal{D} \models \exists y. f_{init}(\bar{x},s) \eq y \land \psi_f(\bar{x},start(s),y,s)$, so $\mathcal{D}$ entails
\begin{align*}
&(\forall)(W \leftrightarrow \exists y. \,(\tau^\star \eq time(\alpha) \land y \eq f_{init}(\bar{x},do(\alpha,\sigma)) \lor{}\\
&\qquad\qquad \tau^\star \nneq time(\alpha) \land \psi_f(\bar{\tau},\tau^\star,y,\sigma)) \land W|^{f(\bar{\tau},\tau^\star, \sigma)}_y).
\end{align*}
The formula $W|^{f(\bar{C},\tau^\star, \sigma)}_y$ mentions exactly one temporal functional fluent fewer than $W$.
Since $\mathcal{D}_{se}$ is stratified, the formula $\psi_f(\bar{C},\tau^\star,y,\sigma))$ mentions temporal functional fluents of a strictly lower stratum than $f$.
Since $W$ is a finite, repeated application of temporal $\mathcal{R}$ to $W$ involves a finite number of steps and yields an expression $(\exists)(\phi \land W')$ where $W'$ mentions no temporal fluents and $\phi$ mentions only temporal fluents of a strictly lower stratum than the highest-stratum temporal functional fluent of $W$. (Repeated application of general $\mathcal{R}$ to $W$ may raise the maximum stratum of the expression due to temporal preconditions and context conditions of SSA, but only finitely many times). After a finite number of steps, regression arrives at an expression which mentions no temporal fluents. Our theorem follows by Theorem 3 of \cite{pirri99}.
\end{proof}
\commentout{	% BEGIN_COMMENTOUT
There are several cases of the usual BATs when the latter entailment problem 
(without $\mathcal{D}_{se}$)  is decidable \cite{reiter,GuSoutchanski2010}. 
In all those cases, we can answer the projection queries in a temporal BAT if 
$\mathcal{D}_{se}$ is formulated in the first order  language of the real numbers 
since the theory of real closed fields is decidable by the Tarski-Seidenberg theorem.
}	%  END_COMMENTOUT
%\begin{proof}
%???
%\end{proof}

\begin{example}[continued] Consider the example from the previous section. Observe that the only SEA of the theory is stratified in that its right-hand side does not reference any temporal fluents.

Let the initial state be noncontradictory and entail the following facts about $S_0$:
\begin{align*}
&start(S_0) = 0, \;Red(I, in_1, S_0), \;que\init(I,in_1, S_0) = 100,\\
& flow(I,in_1, out_2) = 5, \;flow(I,in_1, out_3) = 15,\\
& flow(I,in_1, out_4) = 10.
\end{align*}
Let $W$, the statement of interest, be $que(I, in_1, 3, \sigma) < 95$, i.e., there are fewer than 95 cars in the incoming lane $in_1$ at time $3$ in situation $\sigma$, where $\sigma$ is $do([switch(I,1), switch(I,2)],S_0)$. In this narrative, the lane $in_1$ initially sees the red light, which at $t\eq1$ switches to the left arrow, and at $t\eq2$ to the green light. $W$ is clearly a regressable formula. To determine whether $W$ is entailed by the theory, we can use Theorem 3 to reduce $W$ to a logically equivalent statement about the initial situation whose entailment from $\mathcal{D}_{S_0} \cup \mathcal{D}_{una}$ can be computed by FO theorem proving. (For brevity, we perform the following simplifications on the formulas. First, we replace expressions $start(do(A(\bar{x},t),s)) = t'$ by $true$ ($false$) if $t=t'$ ($t \neq t'$).  Second, we replace relational fluents by their truth values if they can be established by the usual atemporal regression. Third, we exploit unique names for actions.)
\begin{align*}
& \mathcal{R}[que(I, in_1, 3, \sigma) < 95] =\\
& \mathcal{R}[\exists y \exists q_0(y\!<\!95 \land  q_0 \nneq 0 \land  y \eq  q_0\!-\!(15\!+\!5) (3-2) \land{}\\
&\quad q_0 \eq que\init(I,in_1,do([switch(I,1), switch(I,2)],S_0)))]=\\
& \mathcal{R}[\exists y \exists q_0(y\!<\!95 \land q_0 \nneq 0 \land  y \eq  q_0\!-\!(15\!+\!5) (3-2) \land{}\\
&\quad q_0 \eq que(I,in_1,2, do(switch(I,1),S_0)))] = \\
& \mathcal{R}[\exists y \exists q_0(y\!<\!95 \land q_0 \nneq 0 \land  y \eq  q_0\!-\!(15\!+\!5) (3-2) \land{}\\
&\quad \exists y' \exists q_0'(q_0 \eq y' \land q_0' \nneq 0 \land y' \eq q_0'\!-\!10  (2-1) \land{}\\
&\qquad q'_0 \eq que\init(I, in_1, do(switch(I,1),S_0))))]=\\
& \mathcal{R}[\exists y \exists q_0(y\!<\!95 \land q_0 \nneq 0 \land  y \eq  q_0\!-\!(15\!+\!5)  (3-2) \land{}\\
&\quad \exists y' \exists q_0'(q_0 \eq y' \land q_0' \nneq 0 \land y' \eq q_0'\!-\!10  (2-1) \land{}\\
&\qquad q'_0 \eq que(I, in_1, 1, S_0)))]=\\
&\exists y \exists q_0(y\!<\!95 \land q_0 \nneq 0 \land  y \eq  q_0\!-\!(15\!+\!5)  (3-2) \land{}\\
&\quad \exists y' \exists q_0'(q_0 \eq y' \land q_0' \nneq 0 \land y' \eq q_0'\!-\!10  (2-1) \land{}\\
&\qquad \exists y'' (q'_0 \eq y'' \land y''\eq que\init(I, in_1, S_0))))
\end{align*}
The end result further simplifies to 
\begin{align*}
que\init(I, in_1, S_0) - 10(2-1) - (15+5)(3-2) < 95,
\end{align*}
which is a query about $S_0$ and can be answered by plugging 100 for the initial number of cars at $in_1$: $100-10-20 = 70$, which is strictly less than 95, so the statement is true.

In addition to computing entailment, regression can be a powerful diagnostic tool. By analyzing the results of partial regression of a regressable temporal query, it is possible to attribute its validity to a particular action of the given sequence. Let $\mathcal{R}^{\sigma'}$ be a variant of $\mathcal{R}$ which does not regress beyond $\sigma'$, i.e., $\mathcal{R}^{\sigma'}[W]$ is uniform in $\sigma'$ if $\sigma' \sqsubseteq \sigma$ and $W$ is uniform in $\sigma$. We can establish whether $\mathcal{R}^{\sigma'}[W]$ is true for each $\sigma' \sqsubseteq do([switch(I,1), switch(I,2)], S_0)$ as just demonstrated (also using $\mathcal{R}$). In our example, the query holds continuously during and after the action $switch(I,2)$ but is false before and at the instant of the action $switch(I,1)$. We can conclude that the action $switch(I,1)$ as well as the time that has passed since $t\eq1$ and up to the time when $\mathcal{R}^{do(switch(I,1), S_0)}[W]$ became true are responsible for the fact that $W$ holds at $\sigma$. In other words, the fact that the traffic light changed from $Red$ to $LArr$ has allowed cars to flow, and the subsequent passage of time (which is easily computed to be equal to 0.5) caused the query to hold. Recall that the query can be an arbitrary regressable property of the continuous system.

\end{example}

\section{Comparison with Previous Approaches}\label{s:related-sc}

Considering that discrete-continuous systems have been a hot topic for decades, it is impossible to fairly compare hybrid situation calculus to a representative subset of all work in that area. Hence, we draw comparisons only to approaches from the same paradigm.

AI proposals to formalize and reason about physical systems with hybrid temporal behaviours go at least as far back as the work of \cite{dekleer} on na\"ive physics. \citeauthor{dekleer} model physical systems using \emph{confluences}---equations obtained from differential equations by reducing them to only qualitative relationships between quantities. \cite{SandewallKR89} points out that discarding information from a theory cannot lead to better inferences. He argues that differential calculus is the perfect language for modelling continuous change and that the essential task in describing physical systems is to provide a logical foundation for the discrete state transitions. %\citeauthor{SandewallKR89}'s solution to this task was based on a non-monotonic temporal logic.% \cite{SandewallIJCAI89}.
%; this choice of logic is motivated by the need to reason about time using \emph{common sense}---a set of notions about continuity and persistence of facts about the world. 
%His subsequent work on features and fluents \cite{Sandewall98} introduced temporal action %logics (TAL) that support functional fluents but not continuous time/change.

\commentout{    % BEGIN_COMMENTOUT      % Who cares about event calculus ? 
An influential approach to reasoning about continuous processes is proposed in 
\cite{HerrmannThielscher96}. They discuss a few interesting examples and hint
on implementation, but does not develop a methodology for axiomatization of 
continuous change while providing the general solution to the frame problem. It is possible 
to reason about continuous change in the fluent calculus \cite{Thielscher2001}, 
but there fluents have no explicit time argument, and therefore, similar to 
\cite{reiter}, it is impossible to evaluate them at arbitrary moments of time.

The event calculus was conceived by \cite{KowalskiSergot86} as a deliberate departure from situation calculus (in order to avoid the then-unresolved frame problem) and non-classical temporal logics. Event calculus has no notion of situation
, which allows it to expend less effort when talking about incompletely specified narratives and address abductive and inductive tasks more naturally than in SC \cite{Shanahan1997,Shanahan1999}. Attempts to reconcile event calculus with a dated formulation of situation calculus were undertaken by \cite{Shanahan1997,KowalskiSadri97}, but they have limitations and do not take into account Reiter's temporal SC \cite{reiter}. \cite{Miller1996} was an earlier attempt to represent continuous change in SC by building on earlier work of \cite{SandewallIJCAI89,SandewallKR89,pinto1995} and by adapting the ideas from the event calculus. His approach builds on an older version of SC while our paper uses Reiter's temporal SC. % \cite{reiter}. 
% Also, he considers time-independent fluents and situation-independent continuously varying parameters and coordinating their values is somewhat complicated. Moreover, \cite{Miller1996} provides only an example, but no general methodology for deriving axioms from the first principles. Since he uses circumscription to solve the frame problem, he has to introduce additional predicates to make sure he gets only the right minimal models. Our approach builds on Reiter's  solution to the frame problem and we provide general approach to obtaining SSA and SEA. 
The approach developed by \citeauthor{Miller1996} was later adapted to the event calculus in \cite{MillerShanahanKR96}. Modern event calculus \cite{MillerShanahan2002,Mueller2008,Mueller2014} natively represents discrete and continuous change but, to our knowledge, no formal relationship between event calculus and hybrid systems has been established. 
% The main difference between all versions of the event calculus and SC is that the former has no situational term. Because of this limitation, the event calculus cannot take advantage of regression, an important reasoning mechanism for solving the projection problem. The main contributions of our paper are improving Reiter's temporal situation calculus and providing connections with hybrid automata. 
A detailed comparison of our temporal SC with modern versions of the event calculus remains the topic for future research.
}       % END_COMMENTOUT

%\cite{gelfond} develop an action language specifically tasked to model hybrid systems. However, they base its semantics on the notion of transition diagrams and formally connect it only to answer set programming.

%\cite{NakamuraFusaoka2004} is somewhat close to 
%our work in terms of objectives, since they use SC to deal with 
%hybrid automata. However, they are mainly concerned with the clustered variation,
%or Zeno problem, and focus on the non-standard analysis of A. Robinson to 
%represent hyper-real numbers \cite{RobinsonNonstandardAnalysis}. 
%The example they consider does not build on Reiter's temporal SC, 
%and no general axiomatization methodology is developed.

The work in \cite{PintoPhD,pinto1995} presents initial proposals
to introduce time into the situation calculus; they focused on a so-called
actual sequence of actions and introduced representation for occurrences of 
actions wrt an external time-line. 
%\cite{pintoCI98} 
Chapter 6 of  \cite{PintoPhD} discusses several examples
of continuous change and natural events following \cite{SandewallKR89}, but 
without using Sandewall's non-monotonic solution to the frame problem. 
Also, it introduces a new class of objects called parameters
that are used to name continuously varying properties such that each parameter
behaves according to a unique function of time during a fixed situation. 
It is mentioned that parameters can be replaced with functional fluents of time,
but this direction was not elaborated. 
%\cite{PintoPhD} mentioned this in the context when fluents are reified.
%
\cite{Miller1996} is another proposal to represent continuous change in SC.
Building on earlier work of \cite{SandewallKR89,PintoPhD},
\cite{Miller1996} introduces time-independent fluents and situation-independent 
parameters that can be regarded as functions of time, but 
provides only an example, and no general methodology.
%
%from the first principles. His approach builds on an older version of SC 
%and relies on circumsription, while our paper uses Reiter's temporal SC
%and Retier's solution to the frame problem.
%
\cite{ReiterKR96} provides the modern axiomatization of time, concurrency, and
natural actions in SC, and it appears also in \cite{reiter}. 
However, \cite{ReiterKR96} allows only atemporal fluents in
contrast to \cite{PintoPhD}. For this reason, \cite{mes99} proposes an auxiliary
action $watch(t)$ to monitor the execution of robot programs in real time.
\commentout{    % BEGIN_COMMENTOUT
\cite{ClassenHuLakemeyer2007} provide a SC semantics for an expressive fragment of the Planning Domain Definition Language (PDDL) where they borrow the idea of \citeauthor{GrosskreutzLakemeyer2003}.
to 
%not let continuous fluents directly take on numerical values and instead 
map continuous fluents to functions of time. In their research, only linear functions of time are allowed.
 % to model how fluents change with time. In our proposal, arbitrary functions of time can occur in the state evolution axioms. 
% Also, we separate time-independent fluents that participate in the context formulas from fluents that vary with time, and as a consequence, we distinguish between SSA and SEA, but in their approach this distinction is missing. 
}       % END_COMMENTOUT

% The benefits of our methodology, and, in particular, our ability to access the value of continuous quantities between situations, are significant in comparison with other established approaches which integrate time into the language of SC.

The example in Section~\ref{s:example} helps illustrate the differences with our approach.
%ability to access the value of continuous quantities between situations in comparison with %other approaches which integrate time into the language of SC.
Consider Reiter's temporal SC \cite{reiter}, 
which forms the backbone of ours. Since Reiter's fluents are atemporal, the TCA above are replaced by sets of effect axioms for the atemporal fluent $que(i,r,s)$, i.e.,
\begin{align*}
&a \eq switch(i,t) \land{}\\
&\big[LArr(i,r,s)\land que(i,r,s) \nneq 0 \land \exists r'  [lt(i,r, r') \land{}\\
&\quad y \eq ( que(i,r,s)\!-\!flow(i,r,r')\!\cdot\!(time(a)\!-\!start(s))]\big]\\
&\qquad \to que(i,r,do(a,s)) \eq y.
\end{align*}
Note that, in effect axioms, the change in $que$ is associated with a named action. The modeller must replicate this axiom for each action which might affect the context $LArr(i,r,s)\land que(i,r,s) \nneq 0$, and likewise for all other contexts and TCA. In our approach, the change in context is handled separately and does not complicate the axiomatization of continuous dynamics.
The right-hand side 
% $\Theta(i,r,y,s)$ 
of the resulting SSA 
\begin{align*}
&que(i,r,do(a,s))\eq y \liff{}\\
&\quad \gamma_{que}(i,r,y,s)\lor que(i,r,s)\eq y \land \neg \exists y' \gamma_{que}(i,r,y',s)
\end{align*}
can be obtained from the right-hand side of the SEA above by replacing $t$ with $time(a)$, $que\init(i,r,s)$ with $que(i,r,s)$, and the last line by the expression $que(i,r,s) \eq y \land \neg \exists y' \gamma_{que}(i,r,y',s)$.
Notice that the expression $\gamma_{que}(i,r,y,s)$ occurs twice --- first thanks to 
the effect axiom (in a normal form) and then again due to explanation closure ---
see examples in Section 3.2.6 in \cite{reiter}. In our approach, by Theorem \ref{th:main}, only the essential atemporal part of that expression appears.
% [ \neg Red(i,r,s) \land que(i,r,s)\eq 0 \land y' \eq que(i,r,s)]$. 
%This 
%SSA does not enforce the constancy of the continuous value when no TCA context is satisfied,
%but, in essence, it achieves the desired goals. 
Furthermore, Reiter's version of the precondition axiom for $empty(i,r,t)$ %cannot be obtained as easily
%becomes longer
is necessarily cumbersome 
because it mentions $que(i,r, t, s)$, whose evolution (and thus the value at $t$) depends on the current relational state of $s$. Therefore, the modeller must include 
the right-hand side of the SSA in the precondition, 
% entirety of the SSA % $\gamma_q(i,r,y,s)$, % save for the last line, 
thereby increasing the size of the axioms by roughly the size of the SSA for the continuous fluent $F$ for every mention of $F$ in a precondition axiom while not adding any new information. Moreover, since Reiter's fluents are atemporal, evaluating them at an arbitrary moment of time $t$ requires an auxiliary action.

The approach due to \cite{mes99} introduces the special action $watch(t)$ whose purpose is to advance time to the time-point $t$. This mechanism allows one to access continuous fluents in between
the agent actions, % situations, 
but at a cost: replacing $que(i,r,t,s)$ by $que(i,r,do(watch(t),s))$ in the precondition axiom makes the right-hand side non-uniform in $s$, violates Defn. 4.4.3 in \cite{reiter}, and therefore steps outside of the well-studied realm of basic action theories.

A proposal due to \cite{GrosskreutzLakemeyer2003} considers continuous fluents of a different kind: their values range over functions of time, but neither the continuous fluents nor the action symbols have a temporal argument. Domain actions occur at the same time instant as the preceding situation, and the mechanism for advancing time is the special action $waitFor(\phi)$ which simulates the passage of time until the earliest time point where the condition $\phi$ holds, where $\phi$ is a Boolean combination of comparisons between values of continuous fluents and numerical constants. Aimed specifically at robotic control, this approach relies on a \textsf{cc-Golog} program to trigger the occurrence of the $waitFor$ action.

\cite{FinziPirriIJCAI05} introduce \emph{temporal flexible situation calculus}, a dialect aimed to provide formal semantics and a \textsf{Golog} implementation for constraint-based interval planning which requires dealing with multiple alternating timelines. To represent processes, they introduce fluents with a time argument. However, this time argument marks the instant of the process' creation and is not associated with a continuous evolution.

\section{Modeling Hybrid Automata}\label{s:ha}

Temporal BATs introduced here are naturally suitable for capturing hybrid automata \cite{Nerode2007}. Given an arbitrary basic hybrid automaton $H$, c.f., Section \ref{ss:ha}, we proceed as follows. For every discrete state in the finite set $Q$, we introduce a situation calculus constant $q_i$ with $1 \leq i \leq |Q|$ and let $\mathcal{D}_{S_0}$ contain unique name axioms for all $q_i$. We assume that the transition relation $E$ is encoded by a finite set of static facts $Edge(q,q^\prime)$. Each flow $\varphi_q$ is encoded by the situation-independent function $flow$ such that $flow(q,x,t) \eq y$ iff $\varphi_q(x,t) \eq y$. Each set of invariant states $Inv_q$ is encoded by the static predicate $Inv(q,x)$ which holds iff $x \in Inv_q$. Each reset relation $R_{q,q^\prime}$ is encoded by the static predicate $Reset(q, q^\prime, x, y)$ which holds iff $y \in R_{q,q^\prime}(x)$. The set of initial states $Init$ is encoded by the static predicate $Init(q,x)$ which holds iff $(q, x) \in Init$.

%Each guard region $Grd_{q,q^\prime}$ is encoded by...

Let $trans(q,q^\prime,y,t)$ be the only action symbol representing a transition from a discrete state $q$ to a discrete state $q^\prime$ at time instant $t$ while resetting the continuous state to the value $y$. Let the atemporal functional fluent $Q(s)$ describe the discrete state in situation $s$, and let the temporal functional fluent $X(t,s)$ describe the continuous state in situation $s$ at time $t$. The dynamics of discrete state transitions and the evolution of the continuous state variables can be axiomatized as follows.
\begin{align*}
&Poss(trans(q,q^\prime,y,t),s) \liff Q(s)\eq q \land Edge(q,q^\prime) \land{}\\
&\quad  \exists x(X(t,s)\eq x \land Reset(q,q^\prime,x, y) \land Inv(q^\prime, y)),
\end{align*}
\begin{align*}
&Q(do(a,s))\eq q \liff \exists q^\prime \exists y \exists t (a\eq trans(q^\prime, q, y, t)) \lor{}\\
&\qquad Q(s) \eq q \land \neg \exists q^{\prime} \exists y \exists t (a\eq trans(q, q^{\prime}, y, t)),
\end{align*}
\begin{align*}
&X\init(do(a,s)) \eq x \liff \exists q \exists q^\prime \exists t (a\eq trans(q, q^\prime, x, t)),\\ 
&X(t,s) \eq x \liff \textstyle\bigvee_{i=1}^k [Q(s) \eq q_i \land x \eq flow(q_i, X\init(s), t).
\end{align*}
The precondition axiom for $trans$ states that a transition from $q$ to $q^\prime$ while resetting continuous state to $y$ is possible at time $t$ iff the current discrete state is $q$, there is an edge from $q$ to $q^\prime$ in the graph, the reset relation determines the new continuous state $y$, and the resulting state is legal. The SSA for the discrete state asserts that $q$ is the current state iff we transition into it and do not transition out. The initial value SSA for the continuous state forces $X\init$ to take on the value prescribed by the reset relation. Finally, the continuous state evolves in each situation $s$ starting with the initial value $X\init(s)$ according to the flow associated with the current discrete state. 
The %following theorem guarantees that the 
proposed translation is sound and complete: see Theorem~\ref{th:3}.

\begin{theorem}\label{th:3}
Let $\mathcal{D}$ be a satisfiable temporal BAT axiomatizing a basic hybrid automaton $H$ as described above, %such that
%\begin{align*}
%&\mathcal{D} \models{} Init(Q(S_0), X\init(S_0)), %\forall q \forall x [Q(S_0) \eq q \land X\init(S_0)\eq x \to Init(q,x)],
%\end{align*}
let $\sigma \eq do([\alpha_1, \ldots, \alpha_n], S_0)$ be an executable ground situation term of $\mathcal{D}$, and $\tau$ a real number such that $\tau \geq start(\sigma)$, i.e., $\tau$ is a time point after the last action $\alpha_n$ of $\sigma$. Then
\begin{align*}
&\mathcal{D} \models{} Init(Q(S_0), X_\text{init}(S_0)) \land{} \\
&\hphantom{\mathcal{D} \models{}} (\forall a, s,t)\big[do(a,s) \!\sqsubseteq\! \sigma \land start(s) \lleq t \lleq time(a) \lor{}\\
&\hphantom{\mathcal{D} \models{}} s \eq \sigma \land start(\sigma) \leq t \leq \tau \big] \to Inv(Q(s), X(s,t))
\end{align*}
if and only if a finite trajectory of $H$ can be uniquely constructed from $\sigma$ and $\tau$. % represent a finite trajectory of $H$.
\end{theorem}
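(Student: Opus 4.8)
The plan is to build an explicit correspondence between pairs $(\sigma,\tau)$, where $\sigma = do([\alpha_1,\ldots,\alpha_n],S_0)$ is executable and $\tau \geq start(\sigma)$, and candidate trajectories $\eta = \langle \Delta_i, q_i, \nu_i\rangle_{i\in I}$ of $H$, and then to show that the candidate attached to $(\sigma,\tau)$ is a genuine trajectory exactly when the displayed formula is entailed. Writing $s_i = do([\alpha_1,\ldots,\alpha_{i-1}],S_0)$, executability of $\sigma$ forces each $\alpha_i$ to be a transition $trans(q_i^-,q_i^+,y_i,t_i)$ with $\mathcal{D}\models Poss(\alpha_i,s_i)$; I therefore set $I = \{1,\ldots,n+1\}$, $q_i = Q(s_i)$, $\Delta_i = time(\alpha_i)-start(s_i)$ for $i\leq n$ and $\Delta_{n+1} = \tau - start(\sigma)$, and recover each curve by $\nu_i(\delta) = X(start(s_i)+\delta,\,s_i)$, which by the SEA for $X$ is governed by the flow $flow(q_i,X\init(s_i),\cdot)$. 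Since $\mathcal{D}$ is satisfiable and $Q$, $X\init$, $X$ are functional fluents meeting the consistency conditions (\ref{eq:func-consistency})--(\ref{eq:init-consistency}), and since $flow$ is a genuine function, this $\eta$ is \emph{uniquely} fixed by $(\sigma,\tau)$; this determinism underwrites the word ``uniquely'' in the statement, and the values involved can if desired be pushed back to $\mathcal{D}_{S_0}\cup\mathcal{D}_{una}$ by regression (Theorem 3).

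The technical core is a bookkeeping claim, proved by induction on $n$, that ties the situation-calculus data along the prefixes $s_i$ to the trajectory data: the SSA for $Q$ together with the unique name axioms keeps each $Q(s_i)$ among the state constants $q_1,\ldots,q_{|Q|}$; the SSA for $X\init$ forces $X\init(s_{i+1})$ to equal the reset value $y_i$ carried by $\alpha_i$; and the SEA for $X$ makes $\nu_i$ evolve along $\varphi_{q_i}$. With this in hand, the defining clauses (a), (b) and (e) of a trajectory hold \emph{automatically} from executability. The inequalities $start(s_i)\leq time(\alpha_i)$ and $\tau \geq start(\sigma)$ make every $\Delta_i$ a finite nonnegative duration and their sum finite, giving (a); the SSA for $Q$ gives (b); and the precondition of $trans$ delivers precisely $Edge(q_i,q_{i+1})$, i.e.\ $(q_i,q_{i+1})\in E$, and $Reset(q_i,q_{i+1},\nu_i(\Delta_i),\nu_{i+1}(0))$, i.e.\ $(\nu_i(\Delta_i),\nu_{i+1}(0))\in R_{q_i,q_{i+1}}$, because $\nu_i(\Delta_i)=X(time(\alpha_i),s_i)$ and $\nu_{i+1}(0)=X\init(s_{i+1})=y_i$, which is clause (e).

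What is left are exactly the two clauses not forced by executability: clause (d), $(q_1,\nu_1(0))\in Init$, and the requirement in (c) that each $\nu_i$ lie entirely inside $Inv_{q_i}$. But $(q_1,\nu_1(0)) = (Q(S_0),X\init(S_0))$, so (d) is literally the conjunct $Init(Q(S_0),X\init(S_0))$ of the entailed formula; and the universally quantified conjunct, as its guard ranges over all $do(a,s)\sqsubseteq\sigma$ with $t\in[start(s),time(a)]$ and over $s=\sigma$ with $t\in[start(\sigma),\tau]$, asserts $Inv(Q(s),X(t,s))$ at exactly the instants sweeping out the curves $\nu_1,\ldots,\nu_{n+1}$, hence is equivalent to $\nu_i([0,\Delta_i])\subseteq Inv_{q_i}$ for every $i$. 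Therefore $\eta$ satisfies (a)--(e) iff the displayed formula is entailed, and this yields both directions at once: if the entailment holds then $\eta$ is a bona fide and (by determinism) unique finite trajectory, while if a finite trajectory can be uniquely built from $(\sigma,\tau)$ it must coincide with $\eta$, whose validity forces (c) and (d) and hence the entailment. I expect the only real obstacle to be careful alignment at the seams: matching the absolute-time argument of $flow$ in the SEA with the elapsed-time parameterization $\delta\mapsto\nu_i(\delta)$, and handling the shared boundary instants $t=time(\alpha_i)$ so that continuity of each $\nu_i$ and the hand-off $\nu_i(\Delta_i)\mapsto\nu_{i+1}(0)$ through the reset relation are simultaneously respected.
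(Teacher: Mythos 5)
Your proposal is correct and follows essentially the same route as the paper's own proof: both construct the candidate trajectory $\eta = \langle \Delta_i, q_i, \nu_i\rangle_{i\in[1,n+1]}$ directly from the prefixes of $\sigma$ together with $\tau$, observe that clauses (a), (b), (e) of the trajectory definition follow from executability, the precondition axiom for $trans$, and the successor state axioms, and then identify clause (d) with the $Init$ conjunct and clause (c) with the universally quantified $Inv$ conjunct of the entailed formula, giving both directions of the equivalence at once. Your explicit relative-time parameterization $\nu_i(\delta) = X(start(s_i)+\delta, s_i)$ and the discussion of uniqueness merely flesh out details that the paper's sketch leaves implicit.
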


\begin{proof} (Sketch.)
%Here, the three conjuncts of the entailed expression state, respectively, the following: the initial state is legal; there is no point in time between $S_0$ and $\sigma$ (inclusively) where the continuous state steps outside the permitted domain; and the time interval from the beginning of $\sigma$ and up to and including $\tau$ is likewise legal. 
%
Fix an arbitrary model $\mathcal{M}$ of $\mathcal{D}$ and consider the sequence $\eta \eq  \langle \Delta_i, q_i, \nu_i\rangle_{i\in [1,n+1]}$ such that
\begin{itemize}
\item $\Delta_j \eq (time(\alpha_j)\!-\!start(\sigma_{j}))^\mathcal{M}$ for each $do(\alpha_{j}, \sigma_{j}) \sqsubseteq \sigma$ and $\Delta_{n+1} \eq \tau\!-\!start(\sigma)^\mathcal{M}$;
\item $q_j \eq Q(\sigma_j)^\mathcal{M}$  for each $do(\alpha_{j}, \sigma_{j}) \sqsubseteq \sigma$, $q_{n+1} \eq Q(\sigma)^\mathcal{M}$;
\item $\nu_j = flow^\mathcal{M}(q_j,X\init(\sigma_{j})^\mathcal{M},t)$ for each $do(\alpha_{j}, \sigma_{j}) \sqsubseteq \sigma$ and $\nu_{n+1} = flow^\mathcal{M}(q_{n+1},X\init(\sigma)^\mathcal{M},t)$.
\end{itemize}
Clearly, $\eta$ is finite. Since $\sigma$ is executable, $time(\alpha_i) \ggeq start(\sigma_i)$, so the property (a) of a trajectory is satisfied, and $\Sigma_{i\in I}\Delta_i$ is also finite. By the precondition of $trans$, we have (e). Assuming $ \mathcal{M} \models Init(Q(S_0),X\init(S_0))$, we have (d). By (d) and the SSA for $Q$, we have (b). (c) follows from the remainder of the premise and the SEA for $X$. Thus, $\eta$ is a finite trajectory of $H$. Conversely, if $\eta$ is a finite trajectory of $H$, then by (d) we have $\mathcal{M} \models Init(Q(S_0),X\init(S_0))$, and by (c)--(e), $\mathcal{M}$ entails the remainder of the SC expression.
\end{proof}
%By requiring in Theorem \ref{th:3} that $\mathcal{D}$ be a temporal BAT and not just any theory with the above axioms, we implicitly require that the consistency properties of atemporal and temporal functional fluents are satisfied. In particular, this means that the initial state theory $\mathcal{D}_{S_0}$, which we did not show, entails the existence and uniqueness of the discrete and the continuous state at $S_0$. The provided APA, SSA, and SEA maintain this property.
Clearly, this axiomatization of HA is a very special case of a temporal BAT. It rules out any non-trivial queries about the content of its states because its discrete states are a finite set and there are no objects, let alone relations on objects. %It also relies on a provided explicit solution to the differential equations. --- this is now irrelevant
A general temporal BAT does not have this limitation.

%Observe that temporal BATs are more general than hybrid automata. Although they always have a finite number of contexts, these contexts are FO formulas which, when instantiated, can give rise to an infinite number of concrete contexts.

%%\section{Related Work}\label{s:disc}

%Outside of AI, complex real-world systems are commonly analyzed as hybrid automata 5\cite{Nerode2007}. 

We conclude this section by observing that, while classic hybrid automata \cite{Nerode2007} are based on a finite representation of states and atomic state transitions (apart from the continuous component), richer representations began to attract the interest of the hybrid system community. Of particular interest is the work by Platzer \cite{Platzer2010,Platzer2012LAHS,PlatzerJAR2017} %Platzer2007ENTCS
based on first-order dynamic logic \cite{Pratt76,HaKT00} extended to handle  differential equations for describing continuous change. The work presented here contributes to this line of research by providing a very rich representation of the discrete states described relationally through the richness of Situation Calculus. 
\cbstart
Both \cite{PlatzerQDDL2012} and our paper propose to go beyond
hybrid automata with a finite number of states to hybrid systems where states may have
complex structure. The main difference between these two approaches is in the availability of
situation terms. As a consequence, the usual SC-based reasoning tasks \cite{reiter} %e.g., epistemic planning, 
can be easily formulated in our temporal BATs.
\cbend 

\section{Conclusion}\label{s:conclusion}

Inspired by hybrid systems, we have proposed a temporal extension of SC with a clean distinction between atemporal fluents, responsible for transitions
between states, and new functional fluents with a time argument, representing 
continuous change within a state.

%%Our work can be considered as a new elegant integration of all
%%previous work on the representation of time in SC.  

%\samepage
In this paper we focused on semantics. 
However the connection with Hybrid Systems established here opens new perspectives for future work on automated reasoning as well.

In hybrid systems, the practical need for robust specification and
verification tools for hybrid automata resulted in the development of
a multitude of logic-based approaches. An in-depth overview of logics
for analyzing hybrid systems is given in \cite{davoren}.  More
recently, the results from
\cite{GaoAvigadClarkeLICS2012,GaoAvigadClarkeIJCAR2012} show that
that certain classes of decision
%$\delta$-relaxed decision 
problems belong to reasonable complexity
classes.  These results provide foundations for verification of
robustness in hybrid systems
% while the precise reachability problem for hybrid systems is undecidable
\cite{KongGaoChenClarkeTACAS2015}.
Platzer's work offers some decidability results for verification based on quantifier eliminations \cite{Platzer2010,PlatzerQDDL2012,PlatzerJAR2017}.
\cbstart
Note that quantified differential dynamic logic, % (QDDL),
the variant of first-order dynamic logic studied in \cite{PlatzerQDDL2012}, which focuses on functions and does not allow for arbitrary relations on objects,  cannot encode  situation calculus action theories in an obvious way. For example, it includes only one low-level primitive action, namely assignment, but the BATs provide agent actions that can be used to model a system on a higher level of abstraction. 
\cbend
Nevertheless, it may be interesting to study the reductions of fragments of Golog \cite{LevesqueRLLS97} and basic action theories with or without continuous time to such a dynamic logic, to exploit existing \cite{PlatzerQDDL2012} and future decidability results. 

On the other hand, while research in hybrid systems focuses on solving certain verification problems, the present paper, thanks to regression over situations, proposes an approach to solve other reasoning problems that cannot be formulated in hybrid systems. Moreover, the recent work on bounded theories \cite{DeGiacomoLesperancePatriziAIJ16,CalvaneseGMP18} provides promising means to study decidable cases in the realm of situation calculus, which could be of interest to hybrid systems as well.
This conceptual interaction between hybrid systems and situation calculus is an interesting avenue for future work.

\section*{Acknowledgement}
Thanks to the Natural Sciences and Engineering Research Council of Canada for
financial support.

%\clearpage

\bibliography{refs.bib}
\bibliographystyle{aaai}

\onecolumn
\appendix

\section{Proof of Theorem 1}

\subsection*{Preliminaries}
\begin{itemize}
\item[] \emph{Temporal change axiom}. For $1 \leq i \leq k$ for some $k \geq 1$:
\begin{align*}
\gamma_i(\bar{x},s) \land \delta_i(\bar{x},y,t,s) \to f(\bar{x},t,s)\eq y.\tag{TCA}
\end{align*}

\item[] \emph{Property of a well-defined TCA}:
\begin{align*}
\gamma_i(\bar{x},s) \to \exists y\, \delta_i(\bar{x},y,t,s).\tag{WDP}
\end{align*}

\item[] \emph{Positive normal form change axiom}:
\begin{align*}\tag{PNFCA}
\Phi(\bar{x},y,t,s) \to f(\bar{x},t,s)\eq y,
\end{align*}
where $\Phi(\bar{x},y,t,s)$ is $\bigvee_{i=1}^k \gamma_i(\bar{x},s) \land \delta_i(\bar{x},y,t,s)$.

\item[] \emph{Consistency axiom}:
\begin{align*}\tag{Cons}
\Phi(\bar{x},y,t,s) \land \Phi(\bar{x},y',t,s) \to y\eq y'.
\end{align*}

\item[] \emph{Explanation closure axiom}:
\begin{align*}\tag{ECA}
f(\bar{x},start(s), s) \nneq f(\bar{x}, t, s) \to \exists y \, \Phi(\bar{x},y,t,s).
\end{align*}
\end{itemize}

\begin{lemma}\label{exclusive}
If $\mathcal{D} \models$ Cons, then a set $S$ of temporal change axioms (TCA) can be syntactically transformed into another set $S'$ of TCA in which any two distinct contexts $\gamma_{a}(\bar{x},s)$, $\gamma_{b}(\bar{x},s)$ are mutually exclusive wrt $\mathcal{D}$.
\end{lemma}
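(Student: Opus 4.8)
The plan is to use the standard case-splitting construction that Reiter employs to put effect axioms into a mutually-exclusive normal form, here adapted to the contexts of the TCAs. The one fact that makes everything go through is that the hypothesis $\mathcal{D} \models$ Cons forces any two \emph{overlapping} contexts to agree on the value of $y$ assigned by their respective $\delta$-formulas, so collapsing overlapping cases loses no information.

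First I would introduce, for each nonempty index set $J \subseteq \{1, \ldots, k\}$, a new context
\[
\gamma_J(\bar{x},s) \isdef \bigwedge_{i \in J}\gamma_i(\bar{x},s) \land \bigwedge_{i \notin J}\neg\gamma_i(\bar{x},s),
\]
together with the single representative formula $\delta_J \isdef \delta_{\min(J)}$, forming the new TCA $\gamma_J(\bar{x},s) \land \delta_J(\bar{x},y,t,s) \to f(\bar{x},t,s)\eq y$, and I would discard any $J$ for which $\gamma_J$ is unsatisfiable wrt $\mathcal{D}$. Pairwise mutual exclusivity of the surviving contexts is then immediate and uses \emph{none} of the hypotheses: for $J \nneq J'$ there is an index $i$ lying in exactly one of $J, J'$, so one of the two new contexts entails $\gamma_i$ while the other entails $\neg\gamma_i$, whence $\mathcal{D} \models \neg(\gamma_J \land \gamma_{J'})$. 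I would also note that each new TCA stays well-defined, since $\gamma_J$ entails $\gamma_{\min(J)}$ and WDP supplies a witness $y$ for $\delta_{\min(J)}$.

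Second I would show that $S'$ carries the same content as $S$ by proving that $\Phi'(\bar{x},y,t,s) \isdef \bigvee_J (\gamma_J \land \delta_J)$ is logically equivalent, modulo $\mathcal{D}$, to the original $\Phi(\bar{x},y,t,s) = \bigvee_{i}(\gamma_i \land \delta_i)$; this is what guarantees that PNFCA, ECA, and every downstream construction are preserved. The direction $\Phi' \to \Phi$ is trivial, since $\gamma_J$ entails $\gamma_{\min(J)}$, so each disjunct of $\Phi'$ implies the disjunct $\gamma_{\min(J)} \land \delta_{\min(J)}$ of $\Phi$. For $\Phi \to \Phi'$ I would argue semantically: fix a model $\mathcal{M}$ of $\mathcal{D}$ and an assignment satisfying some disjunct $\gamma_i \land \delta_i(y)$, and take $J$ to be the exact set of indices whose context holds under that assignment, so that $\gamma_J$ holds and $i \in J$.

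The main obstacle, and the only place the hypothesis $\mathcal{D} \models$ Cons is genuinely used, is the final replacement of $\delta_i$ by the representative $\delta_{\min(J)}$ in that last step. Here I would invoke WDP to obtain some $y'$ with $\delta_{\min(J)}(\bar{x},y',t,s)$, and then appeal to Cons: since both $\gamma_i$ and $\gamma_{\min(J)}$ hold, the two witnesses $\delta_i(y)$ and $\delta_{\min(J)}(y')$ are both disjuncts of $\Phi$, forcing $y \eq y'$, so in fact $\delta_{\min(J)}(y)$ holds and the disjunct $\gamma_J \land \delta_{\min(J)}(y)$ of $\Phi'$ is satisfied. Everything else is routine propositional and quantifier bookkeeping, so once this replacement step is nailed down the lemma follows.
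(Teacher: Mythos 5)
Your proof is correct, and it rests on exactly the same key observation as the paper's: in models of Cons, two overlapping contexts force their $\delta$-formulas to agree on the value $y$, so the overlap can be handled by a single representative $\delta$ without losing information. The construction, however, is genuinely different. The paper works incrementally: it picks one pair of jointly satisfiable contexts $\gamma_a$, $\gamma_b$, replaces the two axioms by three with contexts $\gamma_a \land \neg\gamma_b$, $\neg\gamma_a \land \gamma_b$, and $\gamma_a \land \gamma_b$ (keeping $\delta_a$ on the overlap), and iterates; you build in one shot the full atomic decomposition $\gamma_J$ over all nonempty $J \subseteq \{1,\ldots,k\}$, which is the fixed point toward which the paper's pairwise splitting converges. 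Your version makes mutual exclusivity purely syntactic (immediate from an index in the symmetric difference of $J$ and $J'$), at the price of an up-front exponential blow-up that the paper's lazy splitting avoids when few contexts overlap; you also prove preservation of the combined formula $\Phi$ explicitly and semantically, where the paper only asserts that the replacement set is logically equivalent. One point to flag: your appeal to WDP (for well-definedness of the new TCAs and in the direction $\Phi \to \Phi'$) goes beyond the lemma's stated hypothesis, which grants only $\mathcal{D} \models$ Cons. But this is a hidden dependence of the paper's proof as well: keeping only $\delta_a$ on the overlap $\gamma_a \land \gamma_b$ loses the constraint contributed by $\delta_b$ in any model where $\delta_a$ has no witness there, so the paper's claimed equivalence wrt $\mathcal{D} \land \mathrm{Cons}$ also tacitly requires WDP --- which does hold where the lemma is actually used, inside Theorem 1. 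Both arguments could be made hypothesis-clean by taking the representative on each overlap region to be $\bigvee_{i \in J}\delta_i$ rather than a single $\delta_{\min(J)}$; then the transformed set is equivalent to the original outright, with Cons needed only to guarantee that the result still determines at most one value.
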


\begin{proof}
Suppose the set of TCA contains two axioms
\begin{align}
\label{eq:tca1}&\gamma_a(\bar{x},s) \land \delta_a(\bar{x},y,t,s) \to f(\bar{x},t,s)\eq y,\\
\label{eq:tca2}&\gamma_b(\bar{x},s) \land \delta_b(\bar{x},y,t,s) \to f(\bar{x},t,s)\eq y.
\end{align}
with $a \neq b$ such that $\mathcal{D} \models \exists \bar{x} \exists s (\gamma_a(\bar{x},s) \land \gamma_b(\bar{x},s))$. By Cons, 
\begin{align*}
&\gamma_a(\bar{x},s) \land \delta_a(\bar{x},y,t,s) \land \gamma_b(\bar{x},s) \land \delta_b(\bar{x},y',t,s) \to y\eq y',
\end{align*}
i.e., whenever the premises of both axioms are satisfied, they must agree on $y$. Thus, we can replace equations (\ref{eq:tca1}), (\ref{eq:tca2}) with a logically equivalent (wrt $\mathcal{D} \land \text{Cons}$) set of new temporal change axioms
\begin{align*}
&[\gamma_a(\bar{x},s) \land \neg \gamma_b(\bar{x},s)] \land \delta_a(\bar{x},y,t,s) \to f(\bar{x},t,s)\eq y,\\
&[\neg \gamma_a(\bar{x},s) \land \gamma_b(\bar{x},s)] \land \delta_b(\bar{x},y,t,s) \to f(\bar{x},t,s)\eq y,\\
&[\gamma_a(\bar{x},s) \land \gamma_b(\bar{x},s)] \land \delta_a(\bar{x},y,t,s) \to f(\bar{x},t,s)\eq y,
\end{align*}
whose contexts are strictly mutually exclusive wrt $\mathcal{D}$. By repeatedly applying this process to each pair of TCA whose contexts are simultaneously satisfiable, we obtain a set of TCA whose contexts are pairwise mutually exclusive wrt $\mathcal{D}$.
\end{proof}

Henceforth, we assume that all contexts of TCA are pairwise mutually exclusive.

\addtocounter{theorem}{-4}
\begin{theorem}
If, for each TCA, the background theory $\mathcal{D}$ entails WDP, then the axioms PNFCA, ECA in the models of Cons are logically equivalent to
\begin{align*}
\begin{split}
f(\bar{x},t,s)\eq y \liff [\Phi(\bar{x},y,t,s) \lor y \eq f(\bar{x},start(s), s) \land \neg \Psi(\bar{x},s)],
\end{split}
\end{align*}
where $\Psi(\bar{x},s)$ denotes $\bigvee_{1\leq i \leq k}\gamma_i(\bar{x}, s)$, a disjunction of all contexts.
\end{theorem}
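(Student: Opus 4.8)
The plan is to prove the biconditional by establishing the two entailments separately, working throughout inside an arbitrary model of Cons in which WDP holds for every context. I abbreviate $f(\bar{x},start(s),s)$ by $f_0$, and I record at the outset the single observation that drives everything: under WDP the two formulas $\exists y\,\Phi(\bar{x},y,t,s)$ and $\Psi(\bar{x},s)$ are equivalent. Indeed, $\Phi = \bigvee_i(\gamma_i \land \delta_i)$, so any witness to $\exists y\,\Phi$ makes some $\gamma_i$, hence $\Psi$, true; conversely, if some $\gamma_i$ (and thus $\Psi$) holds, WDP supplies a $y$ with $\delta_i(\bar{x},y,t,s)$, so $\gamma_i \land \delta_i$, hence $\exists y\,\Phi$, holds. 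Consequently ECA may be read as $f(\bar{x},t,s) \neq f_0 \to \Psi(\bar{x},s)$, equivalently (contrapositive) $\neg\Psi(\bar{x},s) \to f(\bar{x},t,s) = f_0$. I also use freely that $f$ is a functional fluent, so $f(\bar{x},t,s)=y$ together with $f(\bar{x},t,s)=y'$ forces $y=y'$; it is precisely this functionality, made consistent with PNFCA by Cons, that will let me transfer an existential witness to the actual value of $f$.

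First I would dispatch the easy direction, SEA $\Rightarrow$ PNFCA $\land$ ECA. PNFCA is immediate from the right-to-left half of SEA: if $\Phi(\bar{x},y,t,s)$ holds, the first disjunct of the SEA body is satisfied, so $f(\bar{x},t,s)=y$. For ECA, I instantiate the left-to-right half of SEA with $y := f(\bar{x},t,s)$; the premise then holds trivially and yields $\Phi(\bar{x},f(\bar{x},t,s),t,s) \lor (f(\bar{x},t,s)=f_0 \land \neg\Psi)$, so whenever $f(\bar{x},t,s)\neq f_0$ the second disjunct is excluded and $\exists y\,\Phi(\bar{x},y,t,s)$ follows. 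This direction uses neither WDP nor Cons.

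Next comes the substantive direction, PNFCA $\land$ ECA $\Rightarrow$ SEA, established through both halves of the SEA biconditional. For the right-to-left half, assume the body $\Phi(\bar{x},y,t,s) \lor (y=f_0 \land \neg\Psi)$: in the first case PNFCA gives $f(\bar{x},t,s)=y$ at once, and in the second case $\neg\Psi$ together with the reformulated ECA gives $f(\bar{x},t,s)=f_0=y$. For the left-to-right half, assume $f(\bar{x},t,s)=y$ and split on $\Psi$. If $\neg\Psi(\bar{x},s)$, the reformulated ECA yields $f(\bar{x},t,s)=f_0$, whence $y=f_0$ and the second disjunct of the body holds. If $\Psi(\bar{x},s)$, the preliminary observation gives a witness $y'$ with $\Phi(\bar{x},y',t,s)$, PNFCA forces $f(\bar{x},t,s)=y'$, and functionality then yields $y=y'$, so $\Phi(\bar{x},y,t,s)$ holds and the first disjunct is satisfied.

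The main obstacle is exactly this last case, $\Psi(\bar{x},s)$ in the left-to-right half: I know only that $f$ takes the value $y$ and that some context fires, yet SEA demands the stronger $\Phi(\bar{x},y,t,s)$, namely that the actual value $y$ is the one prescribed by the firing law. Closing this gap is where the hypotheses combine: WDP produces some law-prescribed value $y'$, PNFCA equates $f$ with it, and the functional, Cons-consistent reading of $f$ identifies $y'$ with $y$. I would emphasize that WDP is indispensable here — without it a context could fire while $\Phi$ remains unsatisfiable, breaking the equivalence $\exists y\,\Phi \liff \Psi$ on which the whole argument rests — whereas the pairwise mutual exclusivity of contexts from Lemma~\ref{exclusive}, though assumed for convenience, is not actually needed, since the disjunctive shape of $\Phi$ and $\Psi$ already suffices.
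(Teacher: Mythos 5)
Your proof is correct, and it follows a genuinely different route from the paper's. The paper's appendix proceeds syntactically through a chain of intermediate forms: it derives NNFCA from Cons and PNFCA, splits ECA into ECA1 and ECA2 in the models of Cons and PNFCA, takes syntactic variants of each, conjoins them into an intermediate axiom SEA1 whose second disjunct carries the condition $\neg\exists z(\Phi(\bar{x},z,t,s)\land y\neq z)$ instead of $\neg\Psi(\bar{x},s)$, and only then uses WDP, Cons, and the pairwise mutual exclusivity of contexts (its Lemma 1) to collapse SEA1 into SEA2. You bypass all of this with a single semantic lemma --- under WDP, $\exists y\,\Phi(\bar{x},y,t,s)\leftrightarrow\Psi(\bar{x},s)$ --- which lets you reformulate ECA as $\neg\Psi(\bar{x},s)\to f(\bar{x},t,s)=f(\bar{x},start(s),s)$, and then settle both halves of the biconditional by short case splits, using functionality of $f$ to transfer the WDP-supplied witness $y'$ onto the actual value $y$. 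Each approach buys something. The paper's chain isolates SEA1, a form equivalent to PNFCA $\land$ ECA in models of Cons with no appeal to WDP at all (WDP enters only in the final step, to simplify SEA1 into SEA2), which has independent interest. Your argument is shorter, pinpoints exactly where WDP is indispensable (your intuition is right: a firing context with unsatisfiable $\delta_i$ would force $f$ to equal its initial value by ECA while both disjuncts of the SEA body fail), and correctly observes that mutual exclusivity of contexts is never needed --- a fact obscured by the paper's invocation of Lemma 1 in its last step. One small correction: the identification $y=y'$ follows from plain functionality of $f$ in any first-order structure, not from Cons; indeed PNFCA together with functionality already entails Cons, so restricting to models of Cons is harmless but does no real work in your argument, exactly as your easy direction (which uses neither WDP nor Cons) already suggests.
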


\subsection*{Derivation}

Unless otherwise noted, we assume that all object assignments interpret the variables $\bar{x},t,s$ arbitrarily and identically.

\begin{enumerate}[label=\Roman*.]

\item\label{item:nnfca}
\emph{Negative normal form change axiom}:
\begin{align*}
f(\bar{x},t,s) \eq y \to \neg\exists z(\Phi(\bar{x},z,t,s) \land y \nneq z). \tag{NNFCA}
\end{align*}
Cons $\land$ PNFCA $\models$ NNFCA.
\begin{proof}

NNFCA holds in a model $\mathcal{M}$ under an object assignment $\sigma$ iff, for every choice of object $A \in \mathbb{R}$, 
\begin{align*}
&\text{either }\mathcal{M},\sigma(y \mapsto A) \models f(\bar{x},t,s)\nneq y \tag*{Case 1}\\
&\text{or }\text{there is no $B \neq A$ in $\mathbb{R}$ such that  }\mathcal{M},\sigma(y \mapsto B) \models \Phi(\bar{x},y,t,s). \tag*{Case 2}
\end{align*}

Take arbitrary $\mathcal{M},\sigma$ such that $\mathcal{M},\sigma \models$ Cons $\land$ PNFCA. Consider Cons. It holds in $\mathcal{M}$ under $\sigma$ iff there is at most one object $Y \in \mathbb{R}$ such that $\mathcal{M},\sigma(y \mapsto Y) \models \Phi(\bar{x},y,t,s)$. If such $Y$ does not exist, then NNFCA is satisfied through Case 2. Otherise, If $Y$ exists and $\mathcal{M}, \sigma \models$ PNFCA, then $\mathcal{M}, \sigma(y \mapsto Y) \models f(\bar{x},t,s) \eq y$. Since $(f)^{\mathcal{M}}$ is a function, for all $Z \nneq Y$, $\mathcal{M}, \sigma(y \mapsto Z) \models f(\bar{x},t,s) \nneq y$, so NNFCA is satisfied through Case 1. Since, for all $Z \nneq Y$, $\mathcal{M},\sigma(y \mapsto Z) \models \neg \Phi(\bar{x},y,t,s)$, NNFCA is satisfied through Case 2.
\end{proof}

\item
In the models of Cons and PNFCA, ECA is equivalent to either of
\begin{align*}
&f(\bar{x},start(s), s) \eq y \land f(\bar{x}, t, s) \nneq y \to \exists z (\Phi(\bar{x},z,t,s) \land y \nneq z),\tag{ECA1}\\
&f(\bar{x},start(s), s) \nneq y \land f(\bar{x}, t, s) \eq y \to \Phi(\bar{x},y,t,s).\tag{ECA2}
\end{align*}
\begin{proof}
ECA is satisfied in an arbitrary model $\mathcal{M}$ under an arbitrary assignment $\sigma$ if and only if either $f^\mathcal{M}(\sigma(\bar{x}), start^\mathcal{M}(\sigma(s)), \sigma(s))$ and $f^\mathcal{M}(\sigma(\bar{x}), \sigma(t), \sigma(s))$ coincide, or they are distinct (implying $start^\mathcal{M}(\sigma(s)) \nneq \sigma(t)$) and there exists some $Y \in \mathbb{R}$ such that $\mathcal{M}, \sigma(y \mapsto Y) \models \Phi(\bar{x},y,t,s)$. In the former case, ECA1 and ECA2 are trivially satisfied as well. Let us consider the latter case.

Cons is satisfied in exactly those models $\mathcal{M}'$ were there is at most one object $Y' \in \mathbb{R}$ such that $\mathcal{M}', \sigma'(y\mapsto Y') \models \Phi(\bar{x},y,t,s)$.

PNFCA is satisfied in exactly those models $\mathcal{M}''$ where, for all $Y'' \in \mathbb{R}$, if $\mathcal{M}'',\sigma''(y\mapsto Y'') \models \Phi(\bar{x},y,t,s)$, then $\mathcal{M}'',\sigma''(y\mapsto Y'') \models f(\bar{x},t,s)\eq y$.

NNFCA is satisfied in the models of Cons $\land$ PNFCA (see \ref{item:nnfca} above).

Let $T_1 \nneq T_2 \in \mathbb{R}$ such that $start^\mathcal{M}(\sigma(s)) \eq T_1$ and $\sigma(t) = T_2$. Let $Y_1 \in \mathbb{R}$ such that $Y_1 = f^\mathcal{M}(\sigma(\bar{x}), start^\mathcal{M}(\sigma(s)), \sigma(s))$. Let $Y_2 \in \mathbb{R}$ such that $Y_2 = f^\mathcal{M}(\sigma(\bar{x}), \sigma(t), \sigma(s))$. Recall that $Y_1 \neq Y_2$. Let $\sigma^\star$ denote the assignment $\sigma(t_1 \mapsto T_1, t_2 \mapsto T_2, y_1 \mapsto Y_1, y_2 \mapsto Y_2, y \mapsto Y)$.

We need to show that ECA1 and ECA2 are satisfied if and only if there exists some $Y \in \mathbb{R}$ such that $\mathcal{M},\sigma^\star \models f(\bar{x},t_1,s) \eq y_1 \land f(\bar{x},t_2,s) \eq y_2 \land \Phi(\bar{x},y,t_2,s)$. 

\emph{Only if}: By PNFCA, $\mathcal{M},\sigma^\star \models f(\bar{x},t_2,s)\eq y$, and since $(f)^\mathcal{M}$ is a function, 
$Y$ and $Y_2$ must coincide. Then $\mathcal{M},\sigma^\star \models f(\bar{x},t_1,s) \nneq y_2 \land f(\bar{x},t_2,s) \eq y_2 \land \Phi(\bar{x},y_2,t_2,s)$ and, since $Y_1 \nneq Y_2$, $\mathcal{M},\sigma^\star \models f(\bar{x},t_1,s) \eq y_1 \land f(\bar{x},t_2,s) \nneq y_1 \land \exists y\, \Phi(\bar{x},y,t_2,s)$.

\emph{If}: From ECA1, there exists some $Z$ distinct from $Y_1$ such that $\mathcal{M},\sigma^\star(z \mapsto Z) \models \Phi(\bar{x},z,t_2,s)$. By PNFCA and Cons, $Z \eq Y$, so $\mathcal{M},\sigma^\star \models f(\bar{x},t_1,s) \eq y_1 \land f(\bar{x},t_2,s) \eq y_2 \land \exists y\, \Phi(\bar{x},y,t_2,s)$. From ECA2, $\mathcal{M}, \sigma^\star \models \Phi(\bar{x},y,t_2,s)$. By PNFCA, $\mathcal{M}, \sigma^\star \models f(\bar{x},t_2,s) \eq y$ and $\mathcal{M}, \sigma^\star \models f(\bar{x},t_1,s) \nneq y$, so $\mathcal{M},\sigma^\star \models f(\bar{x},t_1,s) \eq y_1 \land f(\bar{x},t_2,s) \eq y_2 \land \exists y\, \Phi(\bar{x},y,t_2,s)$.
\end{proof}

\item
Syntactic variant of ECA1:
\begin{align*}
f(\bar{x},start(s), s) \eq y \land \neg \exists z (\Phi(\bar{x},z,t,s) \land y \nneq z) \to f(\bar{x}, t, s) \eq y. \tag{ECA1*}
\end{align*}

\item
PNFCA $\land$ ECA1* is equivalent to
\begin{align}\label{left}
\Phi(\bar{x}, y, t, s) \lor [f(\bar{x},start(s),s) \eq y \land \neg \exists z(\Phi(\bar{x},z,t,s) \land y \nneq z)] \to f(\bar{x},t,s) \eq y.
\end{align}

\item
Syntactic variant of ECA2:
\begin{align*}
f(\bar{x},t,s) \eq y \to \Phi(\bar{x},y,t,s) \lor f(\bar{x},start(s),s) \eq y.\tag{ECA2*}
\end{align*}

\item 
In the models of Cons, ECA2* $\land$ NNFC is equivalent to
\begin{align}\label{right}
f(\bar{x},t,s) \eq y \to \Phi(\bar{x},y,t,s) \lor [f(\bar{x},start(s),s) \eq y \land \neg \exists z(\Phi(\bar{x},z,t,s) \land y \nneq z)].
\end{align}
\begin{proof} ECA2* $\land$ NNFC $\equiv$
\begin{align*}
&f(\bar{x},t,s) \eq y \to [\Phi(\bar{x},y,t,s) \lor f(\bar{x},start(s),s) \eq y] \land \neg\exists z(\Phi(\bar{x},z,t,s) \land y \nneq z)\\
&\equiv f(\bar{x},t,s) \eq y \to [\Phi(\bar{x},y,t,s) \land \neg\exists z(\Phi(\bar{x},z,t,s) \land y \nneq z)] \lor{}\\
&\hspace{8.06em} [f(\bar{x},start(s),s) \eq y \land \neg\exists z(\Phi(\bar{x},z,t,s) \land y \nneq z)]
\end{align*}
In the models of Cons, $\Phi(\bar{x},y,t,s) \land \neg\exists z(\Phi(\bar{x},z,t,s) \land y \nneq z) \equiv \Phi(\bar{x},y,t,s)$, yielding RHS.
\end{proof}

\item
(\ref{left}) $\land$ (\ref{right}) is equivalent to
\begin{align*}
f(\bar{x},t,s) \eq y \liff \Phi(\bar{x},y,t,s) \lor [f(\bar{x},start(s),s) \eq y \land \neg \exists z(\Phi(\bar{x},z,t,s) \land y \nneq z) ]. \tag{SEA1}
\end{align*}

\item
In the models of WDP and Cons, SEA1 is equivalent to
\begin{align*}
f(\bar{x},t,s) \eq y \liff \Phi(\bar{x},y,t,s) \lor [f(\bar{x},start(s),s) \eq y \land \neg \bigvee_{i=1}^k \gamma_i(\bar{x},s)]. \tag{SEA2}
\end{align*}
\begin{proof}

Take arbitrary $\mathcal{M},\sigma$ such that $\mathcal{M},\sigma \models \Phi(\bar{x},y,t,s) \lor [f(\bar{x},start(s),s) \eq y \land \neg \exists z(\Phi(\bar{x},z,t,s) \land y \nneq z)]$. All TCA in $\Phi$ are mutually exclusive wrt $\mathcal{D}$ (see Lemma \ref{exclusive}). Three cases are possible.
\begin{itemize}
\item $\mathcal{M},\sigma \models \gamma_i(\bar{x},s) \land \delta_i(\bar{x},y,t,s)$ for exactly one $1 \leq i \leq k$ and $\mathcal{M},\sigma \models f(\bar{x},start(s),s) \nneq y$. In this case, the right-hand side of the biequivalence SEA2 is also satisfied.
\item $\mathcal{M},\sigma \models \neg \gamma_i(\bar{x},s) \lor \neg \delta_i(\bar{x},y,t,s)$ for all $1 \leq i \leq k$, so $\mathcal{M},\sigma \models f(\bar{x},start(s),s) \eq y \land \neg \exists z(\Phi(\bar{x},z,t,s) \land y \nneq z)$. None of the TCA are in effect. The models of WDP forbid the case where $\mathcal{M},\sigma \models \gamma_i(\bar{x},s)$ but $\mathcal{M},\sigma \models \neg \delta_i(\bar{x},y,t,s)$, meaning that, for all $1 \leq i \leq k$, we have $\mathcal{M},\sigma \models \neg \gamma_i(\bar{x},s)$, so the right-hand side of the biequivalence SEA2 is satisfied.
\item $\mathcal{M},\sigma \models \gamma_i(\bar{x},s) \land \delta_i(\bar{x},y,t,s)$ for exactly one $1 \leq i \leq k$ and $\mathcal{M},\sigma \models f(\bar{x},start(s),s) \eq y$. This is a boundary case between the previous two: one TCA is in effect, but the value of $y$ which it computes for $\bar{x},t,s$ coincides with the initial value of the fluent at $f$. In this case, the right-hand side of the biequivalence SEA2 is also trivially satisfied.
\end{itemize}
Now consider the opposite direction, from SEA2 to SEA1. Take arbitrary $\mathcal{M},\sigma$ such that $\mathcal{M},\sigma \models \Phi(\bar{x},y,t,s) \lor [f(\bar{x},start(s),s) \eq y \land \neg \bigvee_{i=1}^k \gamma_i(\bar{x},s)]$. If $\mathcal{M},\sigma \models \Phi(\bar{x},y,t,s)$, then the right-hand side of SEA1 is satisifed. If $\mathcal{M},\sigma \models f(\bar{x},start(s),s) \eq y \land \neg \bigvee_{i=1}^k \gamma_i(\bar{x},s)$, then $\mathcal{M},\sigma \models \neg \Phi(\bar{x},y,t,s)$, so $\mathcal{M},\sigma \models \neg \exists z(\Phi(\bar{x},z,t,s) \land y \nneq z)]$, and the right-hand side of SEA1 is satisifed.
\end{proof}
\end{enumerate}

\end{document}